%% file: main.tex
\documentclass{fairmeta}

\usepackage{amsmath}
\usepackage{amssymb}
\usepackage{nicefrac}
\usepackage{amsthm}
\input{math_commands.tex}

\usepackage{array}
\usepackage{enumitem}
\usepackage{float}

\titleformat{\paragraph}[runin]{\normalfont\normalsize\bfseries}{}{0pt}{}
\titlespacing*{\paragraph}{0pt}{1.5ex plus 0.5ex minus .2ex}{0.5em}
\DeclareTextFontCommand{\textbf}{\bfseries}

\definecolor{findingnavy}{RGB}{0,0,128}
\definecolor{navy}{RGB}{0,0,128}
\newtcolorbox{findingbox}[1]{
  colback=gray!10,
  colframe=gray!10,
  boxrule=0pt,
  arc=3pt,
  left=8pt, right=8pt, top=3pt, bottom=3pt,
  before skip=6pt,
  after skip=6pt,
  fonttitle=\bfseries,
  title={#1},
  colbacktitle=gray!10,
  coltitle=black,
  detach title,
  before upper={\tcbtitle}
}

\newtcolorbox[auto counter,number within=section]{algorithmbox}[2][]{
  colback=white,
  colframe=black!55,
  boxrule=0.6pt,
  arc=1pt,
  left=6pt, right=6pt, top=4pt, bottom=4pt,
  before skip=6pt,
  after skip=6pt,
  fonttitle=\bfseries,
  before upper={\raggedright},
  title={Algorithm~\thetcbcounter: #2},
  #1,
}

\newtheorem{proposition}{Proposition}[section]

\title{Scaling Laws for Looped Mixture of Experts}

\author[*]{Yanbei Chen}
\author[\dagger]{Anirudh Goyal}
\author[\dagger]{Raghuraman Krishnamoorthi}
\affiliation[]{Meta AI}
\contribution[*]{Corresponding author}
\contribution[\dagger]{Senior authors}
\date{Sep 30, 2026}
\correspondence{\email{yanbeichen@meta.com}}

\abstract{Looped transformers and Mixture-of-Experts (MoE) offer complementary routes to efficient scaling: recurrence increases computational depth at fixed parameters, while MoE sparsity expands total capacity at fixed active compute. Yet existing scaling laws model recurrence or sparsity in isolation. In this work, we introduce \textit{Loop Scaling Laws}, the first scaling law to jointly model recurrence and sparsity alongside model size and data. At its core is a bounded, sparsity-conditional recurrence mapping that characterizes the effective-parameter gain from looping and how sparsity raises this gain. The laws predict the held-out loss of looped models more accurately than prior alternatives, and recover the standard dense and MoE scaling laws as special cases. Beyond prediction, the fitted laws provide a principled foundation for designing looped MoE models under compute and memory constraints. Downstream evaluations further demonstrate the complementary benefits of the two axes: sparsity delivers ${\sim}3\times$ active-parameter efficiency, recurrence yields ${\sim}2\times$ total-parameter efficiency on reasoning, and joint scaling further advances the performance frontier. As a practical extension, we show these gains hold at trillion-token scale: at matched training compute, a looped MoE with law-derived recurrence matches a ${\sim}2\times$ larger non-looped MoE on the reasoning benchmarks, while enabling test-time scaling through recurrence.}

\begin{document}

\maketitle

\section{Introduction}
\label{sec:intro}

Scaling language models has conventionally relied on larger models and more data~\citep{kaplan2020scaling,hoffmann2022chinchilla}. Beyond model and data, two additional axes enable efficient scaling. Mixture-of-Experts (MoE) introduces sparsity, expanding total capacity at fixed active compute, and is now widely adopted by frontier models from cloud~\citep{deepseek,qwen3} to edge~\citep{chen2026mobilemoe,apple2026afm3}. Recently, looped transformers~\citep{dehghani2018universal} introduce recurrence, reusing shared weights across passes to increase computational depth at fixed stored parameters and delivering substantial gains in reasoning~\citep{saunshi2025reasoning,geiping2025scaling,zhu2025scaling,jeddi2026loopformer}. Looped MoE models naturally combine these complementary axes.

Yet the joint scaling behavior of recurrence and sparsity remains underexplored. Scaling laws for looped transformers capture recurrence, modeling its capacity gain with linear or power-law forms that grow without bound~\citep{prairie2026parcae,schwethelm2026much}. MoE laws capture sparsity, varied through expert count, but omit recurrence~\citep{clark2022unified,ludziejewski2025joint,abnar2025parameters}. Recent and concurrent looped MoE studies examine sparsity while fixing recurrence at two passes in their primary scaling analyses~\citep{lee2026sparse,wang2026smelt}, leaving the interaction between the two axes unmodeled. Consequently, a unified scaling law that jointly characterizes recurrence and sparsity in looped MoE models has yet to be established.

\begin{figure}[!t]
    \centering
    \includegraphics[width=\textwidth]{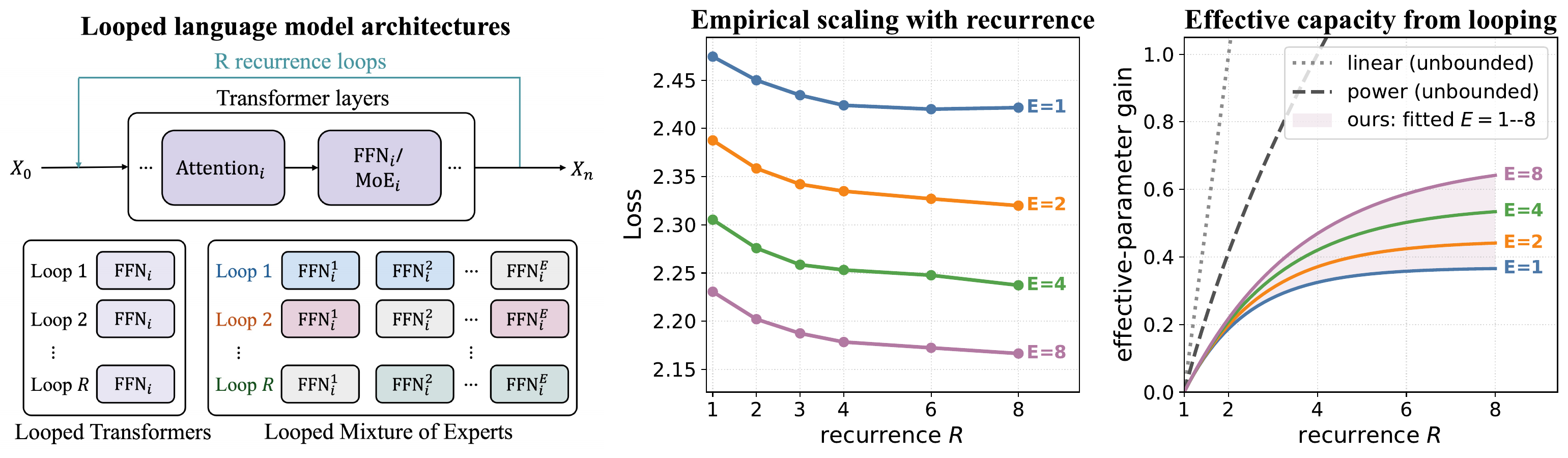}\\
    \makebox[0.43\linewidth]{\small (a)}%
    \makebox[0.31\linewidth]{\small (b)}%
    \makebox[0.26\linewidth]{\small (c)}
\caption{
(a) Dense looped transformers reuse the same FFNs, whereas looped MoE models can route tokens to different experts across recurrent passes.
(b) At fixed active parameters and training tokens, increasing recurrence yields diminishing returns, with the dense model ($E{=}1$) saturating earlier than MoE models ($E{>}1$).
(c) Our Loop Scaling Laws model a bounded effective-parameter gain conditioned on recurrence $R$ and sparsity (varied by expert count $E$), unlike prior linear and power-law mappings that assume unbounded gains.}
    \label{fig:loopmoe}
\end{figure}

Establishing such a law requires understanding how recurrence and sparsity shape the effective capacity gain from looping. This raises two central questions: \textit{\textbf{(Q1)} How much effective capacity does each recurrent pass add, and does this gain grow indefinitely?} \textit{\textbf{(Q2)} Does greater MoE sparsity increase and sustain this gain across recurrent passes?} Two observations in Figure~\ref{fig:loopmoe}(b) point toward the answers. First, increasing recurrence shows diminishing returns: loss drops sharply over the first few passes, and then approaches a plateau within the observed range. Second, greater sparsity, varied by expert count $E$, improves and sustains this gain: with more experts, looping achieves lower loss over more passes. Figure~\ref{fig:loopmoe}(a) explains this interaction intuitively: in a dense looped transformer, every recurrent pass reuses the same weights; whereas in a looped MoE, the router can send a token to different experts across passes, allowing each pass to reach a new set of parameters.

To answer the above questions, we introduce \textit{Loop Scaling Laws}, the first unified law over model size, data, recurrence, and sparsity. The law treats each recurrent pass as adding effective parameters with diminishing returns: each pass adds less than the last, with its total capacity gain approaching a finite asymptote rather than growing without bound. This asymptote also depends on sparsity: more experts raise and sustain the gain over more recurrent passes, so recurrence and sparsity interact within a single form, as illustrated in Figure~\ref{fig:loopmoe}(c). The law subsumes the standard dense, dense-looped, and non-looped MoE scaling laws as special cases. Empirically, it predicts held-out loss more accurately than prior alternatives with unbounded mappings, extrapolating well to unseen recurrence and expert counts.

Beyond prediction, the fitted law provides a principled recipe for looped MoE model design: it selects the compute- and memory-optimal recurrence and expert count under given budgets for resource-constrained deployment. Our empirical results across 14 downstream benchmarks further confirm the complementary gains of sparsity and recurrence. By scaling sparsity, an MoE model can surpass larger dense models with $\nicefrac{1}{3}$ the active parameters, while scaling recurrence enables a looped model to match non-looped models with $\nicefrac{1}{2}$ the total parameters. Jointly scaling both further advances the frontier beyond either axis alone. Together, these results establish joint scaling of recurrence and sparsity as a new parameter-efficient scaling paradigm. We extend this paradigm to practical, trillion-token training: at matched compute, a looped MoE model (0.3B active/1.3B total) with law-derived recurrence matches the reasoning performance of a larger non-looped MoE (0.6B active/2.9B total), trading additional inference compute for approximately $2\times$ parameter efficiency. We also show that the looped MoE can unlock on-demand test-time scaling by varying recurrence at inference. 

\section{Related Work}
\label{sec:related_work}

\paragraph{Looped transformers} (also known as recurrent-depth or recursive transformers) increase computational depth by repeatedly applying shared layers without proportional growth in stored parameters.
Looped architectures have evolved from single-layer recurrence in Universal Transformers~\citep{dehghani2018universal} to full-model recurrence~\citep{zhu2025scaling,giannou2023looped} and middle-block reuse for recursive and latent-reasoning models~\citep{saunshi2025reasoning,zeitoun2026hyperloop,mcleish2025teaching,geiping2025scaling}.
Training objectives range from final-pass next-token prediction loss~\citep{saunshi2025reasoning,geiping2025scaling,mcleish2025teaching} to intermediate-pass supervision~\citep{bae2024relaxed}, self-distillation~\citep{goyal2026elt}, and shortcut consistency~\citep{jeddi2026loopformer}.
Prior work shows that recurrent depth provides parameter-efficient training- and test-time scaling, especially on reasoning tasks~\citep{saunshi2025reasoning,geiping2025scaling,zhu2025scaling,jeddi2026loopformer}. Motivated by these findings, we model recurrence as an explicit scaling axis to characterize how performance scales with looping.

\paragraph{Mixture of experts} (MoE) models increase total capacity without proportionally increasing per-token computation by routing each token to a sparse subset of expert networks~\citep{shazeer2017outrageously,fedus2022switch}.
MoE has become a well-established scaling paradigm across deployment regimes, from open-weight and proprietary frontier models deployed in the cloud, e.g., Mixtral~\citep{mixtral}, DeepSeek-V3~\citep{dai2024deepseekmoe,deepseek}, Qwen3MoE~\citep{qwen3}, and Gemini~\citep{geminiteam2024gemini15}, to resource-constrained models deployed on edge devices, e.g., MobileMoE~\citep{chen2026mobilemoe} and AFM 3 Core Advanced~\citep{apple2026afm3}.
Recent works have also explored looped MoE models, where looped MoE is shown to outperform non-looped dense transformers~\citep{csordas2024moeut}, and MoE effectively improves looping performance~\citep{lee2026sparse} or vice versa~\citep{wang2026smelt}, suggesting the complementary architectural advantages of unifying MoE and looping. We further formulate a unified scaling law that quantifies their scaling benefits in a single functional form.

\begin{table}[!t]
\centering
\caption{Comparison of scaling laws for training LLMs. $\checkmark$ indicates variables explicitly modeled by each law. $N/F$ denotes model size $N$ / training FLOPs $F$. $S/E$ denotes MoE sparsity $S$ (varied by expert expansion $E$).
}
\label{tab:related-scaling-laws}
\scriptsize
\setlength{\tabcolsep}{3.5pt}
\resizebox{0.95\linewidth}{!}{%
\begin{tabular}{@{}llcccclcc@{}}
\toprule
& & \multicolumn{4}{c}{\textbf{Scaling axes}} & &
\multicolumn{2}{c}{\textbf{Optimality}} \\
\cmidrule(lr){3-6}\cmidrule(l){8-9}
\textbf{LLM Category} & \textbf{Scaling law} & $N/F$ & $D$ & $R$ & $S/E$ &
\textbf{Recurrence mapping} & Compute & Memory \\
\midrule
Dense & Chinchilla~\citeyearpar{hoffmann2022chinchilla}
& \checkmark & \checkmark & $\times$ & $\times$
& --
& \checkmark & $\times$ \\
\midrule
\multirow{2}{*}{MoE}
& Unified routed~\citeyearpar{clark2022unified}
& \checkmark & $\times$ & $\times$ & \checkmark
& --
& $\times$ & $\times$ \\
& Joint MoE~\citeyearpar{ludziejewski2025joint}
& \checkmark & \checkmark & $\times$ & \checkmark
& --
& \checkmark & \checkmark \\
\midrule
\multirow{2}{*}{Looped}
& Parcae~(\citeyear{prairie2026parcae}\textcolor{metablue}{/04})
& \checkmark & \checkmark & \checkmark & $\times$
& linear; unbounded$^{\dagger}$
& \checkmark & $\times$ \\
& Iso-Depth~(\citeyear{schwethelm2026much}\textcolor{metablue}{/05})
& \checkmark & \checkmark & \checkmark & $\times$
& power law; unbounded
& \checkmark & $\times$ \\
\midrule
\multirow{3}{*}{Looped MoE}
& Sparse Layers~(\citeyear{lee2026sparse}\textcolor{metablue}{/05})
& \checkmark & $\times$ & $\times$ & $\times$
& fixed recurrence ($R{=}2$)
& \checkmark & $\times$ \\
& SMELT~(\citeyear{wang2026smelt}\textcolor{metablue}{/09})
& \checkmark & \checkmark & $\times$ & \checkmark
& fixed recurrence ($R{=}2$)
& \checkmark & $\times$ \\
& \textbf{Loop scaling laws (ours)}
& \checkmark & \checkmark & \checkmark & \checkmark
& \textbf{bounded; sparsity-dependent}
& \checkmark & \checkmark \\
\bottomrule
\end{tabular}}
\begin{minipage}{\linewidth}
\fontsize{9}{9}\selectfont
$^{\dagger}$Parcae uses the fully unrolled parameter count in its trained-recurrence scaling analysis and separately fits a saturating law over \emph{test-time} depth.
\end{minipage}
\end{table}

\paragraph{Scaling laws} characterize predictable power-law relationships among loss, model size, data, and compute, providing a principled foundation for compute-optimal language model training~\citep{kaplan2020scaling,hoffmann2022chinchilla}.
MoE scaling laws further incorporate routed expert count~\citep{clark2022unified}, expert granularity~\citep{krajewski2024scaling}, sparsity~\citep{abnar2025parameters}, enabling optimization under memory constraints~\citep{ludziejewski2025joint}, efficiency leverage~\citep{tian2026towards}, and on-device constraints~\citep{chen2026mobilemoe}.
More recently, Parcae and Iso-Depth model recurrence for dense looped transformers~\citep{prairie2026parcae,schwethelm2026much}, while concurrent work SMELT analyzes looped MoE models with recurrence fixed in their primary scaling analyses~\citep{wang2026smelt}.
As summarized in Table~\ref{tab:related-scaling-laws}, prior laws model either recurrence or sparsity, but not their joint effect. Our unified MoE loop scaling law jointly models model size, data, recurrence, and sparsity, while recovering the prior laws for dense and MoE models as equivalent reduced forms.

\section{Loop Scaling Laws}
\label{sec:methodology}

\subsection{Preliminaries}
\label{sec:prelim}

\paragraph{Scaling law.} The standard Chinchilla-style scaling law~\citep{hoffmann2022chinchilla,kaplan2020scaling} models the relationship between model parameters $N$, training tokens $D$, and model loss $\mathcal{L}$ as
\begin{equation}
\mathcal{L}(N, D) = A\,N^{\alpha} + B\,D^{\beta} + c ,
\label{eq:chinchilla}
\end{equation}
where $A,B,\alpha,\beta,c$ are fitted coefficients, $\alpha,\beta<0$ are the model and data scaling exponents, $c$ is the irreducible loss. Training compute is $F_{\text{train}}=6ND$, and inference compute is $F_{\text{inf}}=2N$ per token.

\paragraph{Parameter counts and compute.}
A looped model reuses a block of parameters $N_{\text{loop}}$ over $R$ recurrent passes with fixed model parameters $N$. In a forward pass, its unrolled parameter count is
\begin{equation}
N_{\text{unroll}}(R)=N_{\text{act}}+(R-1)\,N_{\text{loop}}.
\label{eq:unroll}
\end{equation}
For dense looped models, the parameters are fixed as $N_{\text{act}}{=}N_{\text{total}}{=}N$. For looped MoE models, the active and total parameters $N_{\text{act}}$ and $N_{\text{total}}$ are fixed. In both cases, $R{=}1$ recovers the corresponding non-looped baseline.
Under full backpropagation through all recurrent passes, the training and per-token inference compute are $F_{\text{train}}=6\,N_{\text{unroll}}(R)\,D$ and $F_{\text{inf}}=2\,N_{\text{unroll}}(R)$, respectively.
While looping recurrence trades compute for quality at fixed parameters, MoE sparsity trades parameters for quality at fixed compute. We explore these complementary scaling axes in the following.

\subsection{Loop Scaling Law}
\label{sec:loop_scaling_law}

\begin{figure}[!t]
\centering
\begin{minipage}[c]{0.60\textwidth}
    \centering
    \includegraphics[width=\linewidth]{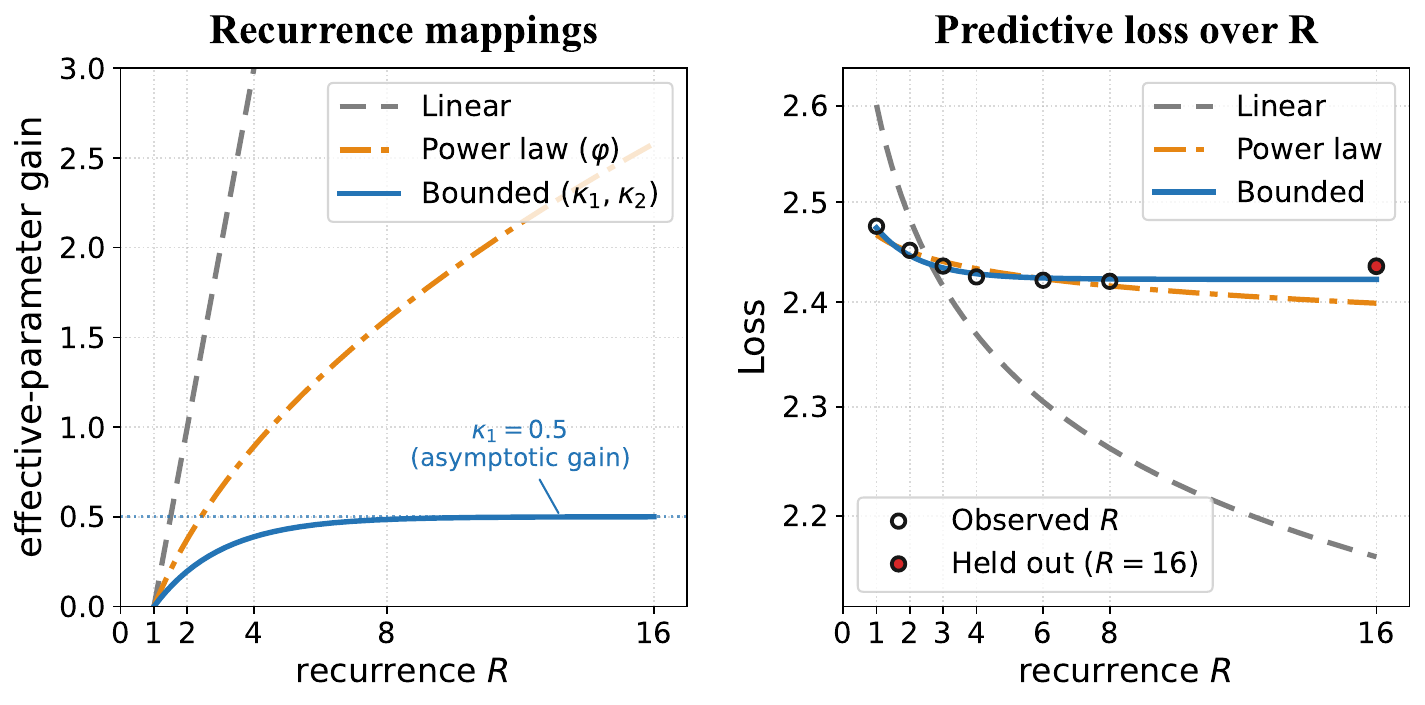}
\end{minipage}\hfill
\begin{minipage}[c]{0.38\textwidth}
    \centering
    {\scriptsize\bfseries Recurrence mappings for $N_{\text{eff}}$\par}
    \scriptsize
    \setlength{\tabcolsep}{1.25pt}
    \renewcommand{\arraystretch}{1.0}
    \resizebox{0.93\linewidth}{!}{%
    \begin{tabular}{@{}l|c|c@{}}
    \toprule
    \textbf{Mapping} & $N_{\text{eff}}-N$ & $R{\to}\infty$ \\
    \midrule
    \textbf{Linear} & $(R{-}1)N_{\text{loop}}$ & $\infty$ \\
    \textbf{Power law} & $(R^{\varphi}{-}1)N_{\text{loop}}$ & $\infty$ \\
    \textbf{Bounded} & $\kappa_1N_{\text{loop}}\!\left(1{-}e^{{-}(R{-}1)/\kappa_2}\right)$ & $\kappa_1N_{\text{loop}}$ \\
    \bottomrule
    \end{tabular}}\\[5pt]
    {\scriptsize\bfseries Held-out prediction RMSE ($\downarrow$)\par}
    \scriptsize
    \setlength{\tabcolsep}{3.5pt}
    \renewcommand{\arraystretch}{1.0}
    \resizebox{0.82\linewidth}{!}{%
    \begin{tabular}{@{}lccc@{}}
    \toprule
    \textbf{Evaluation} & \textbf{Linear} & \textbf{Power law} & \textbf{Bounded} \\
    \midrule
    \textbf{Held-out $R$} & 0.2566 & 0.0313 & \textbf{0.0092} \\
    \textbf{Held-out $N$} & 0.1106 & 0.0211 & \textbf{0.0128} \\
    \textbf{Held-out $D$} & 0.0882 & 0.0073 & \textbf{0.0049} \\
    \bottomrule
    \end{tabular}}
\end{minipage}
\par
\noindent
\makebox[0.30\textwidth]{\small (a)}%
\makebox[0.30\textwidth]{\small (b)}%
\hfill
\makebox[0.38\textwidth]{\small (c)}
\caption{(a) Illustration of effective-parameter gain under linear, power-law, bounded recurrence mappings. (b) Predictive loss by different recurrence mappings: each mapping is fitted on $R\leq8$ and applied to predict the held-out-$R$ up to $R=16$. (c) Formula on recurrence mappings and their effective-parameter gains (top) and their evaluation on held-out recurrence $R$, model $N$, and data $D$ with RMSE scores (bottom).
}
\label{fig:dense-loop}
\end{figure}

The standard scaling law depends only on model $N$ and data $D$, and thus cannot capture the scaling behaviour of looping recurrence $R$. Prior scaling laws for looped transformers address this limitation by replacing $N$ with a recurrence-dependent effective parameter count $N_{\text{eff}}(R)$. Parcae's training scaling law~\citep{prairie2026parcae} uses the fully unrolled parameter count: a linear mapping on $R$; whereas Iso-Depth~\citep{schwethelm2026much} learns a power-law mapping on $R$ with exponent $\varphi$:
\begin{equation}
N_{\text{eff}}^{\text{linear}}(R)=N+(R-1)N_{\text{loop}},
\qquad
N_{\text{eff}}^{\text{power}}(R)=N+(R^\varphi-1)N_{\text{loop}}.
\label{eq:recurrence-mappings-revised}
\end{equation}
$N_{\text{eff}}^{\text{linear}}(R)$ assigns a constant effective-parameter gain $N_{\text{loop}}$ per recurrence; $N_{\text{eff}}^{\text{power}}(R)$ yields diminishing gains with $0{<}\varphi{<}1$. Nevertheless, both mappings are unbounded: $\lim_{R\to\infty}N_{\text{eff}}(R)=\infty$, where increasing recurrence substitutes for parameters indefinitely (Figure~\ref{fig:dense-loop}(a)).

\paragraph{A bounded recurrence mapping.} However, increasing recurrence yields diminishing returns. Empirically, the model loss drops substantially over the earlier passes, and flattens gradually within the evaluated range (Figure~\ref{fig:dense-loop}(b)). This behavior is consistent with weight sharing: each additional pass increases computational depth without introducing new learned parameters, leading to diminishing marginal gains. We therefore introduce the following bounded monotone mapping:
\begin{equation}
N_{\text{eff}}^{\text{bounded}}(R)
=
N+\kappa_1N_{\text{loop}}
\left(1-e^{-(R-1)/\kappa_2}\right),
\qquad \kappa_1>0,\ \kappa_2>0
\label{eq:bounded-dense}
\end{equation}
where $\kappa_1$ sets the asymptotic effective-parameter gain from looping: $\kappa_1N_{\text{loop}}$, while $\kappa_2$ controls how quickly this limit is approached as $R$ increases.

\textbf{\textit{Boundary properties.}} The bounded mapping in Eq.~\ref{eq:bounded-dense} recovers the non-looped baseline at $R{=}1$, and approaches a finite effective-parameter bound as $R{\to}\infty$:
\begin{equation}
N_{\text{eff}}^{\text{bounded}}(1)=N,
\qquad
\lim_{R\to\infty}N_{\text{eff}}^{\text{bounded}}(R)
=N+\kappa_1N_{\text{loop}}.
\end{equation}
Thus, looping contributes at most $\kappa_1N_{\text{loop}}$ additional effective parameters,
an asymptotic property of the mapping rather than a guarantee beyond the observed recurrence range.

\paragraph{Loop scaling law.} For a looped transformer with $R$ recurrence loops, we replace the model-size term $N$ in standard scaling law Eq.~\ref{eq:chinchilla} with a recurrence-dependent effective parameter count $N_{\text{eff}}(R)$:
\begin{equation}
\mathcal{L}(N,D,R)
=
A\,N_{\text{eff}}(R)^{\alpha}
+B\,D^{\beta}+c,
\label{eq:dense-loop}
\end{equation}
where $N_{\text{eff}}(R)$ introduces recurrence-specific coefficients alongside the base scaling-law coefficients $\{A,\alpha,B,\beta,c\}$: in Eq.~\ref{eq:recurrence-mappings-revised}, the linear mapping uses linear constant and the power-law mapping uses $\varphi$, while our bounded mapping in Eq.~\ref{eq:bounded-dense} uses $\{\kappa_1,\kappa_2\}$. Figure~\ref{fig:dense-loop}(a) illustrates the normalized effective-parameter gain $(N_{\text{eff}}(R)-N)/N_{\text{loop}}$ under these mappings. The effective parameter count $N_{\text{eff}}(R)$ is distinct from the unrolled parameter count $N_{\text{unroll}}(R)$: the latter determines compute, whereas the former models the effective parameter capacity attributed to weight-tied recurrence.

\textbf{\textit{Reduced form $\mathcal{L}|_{R=1}$.}}
For any recurrence mapping satisfying $N_{\text{eff}}(1)=N$, the loop scaling law in Eq.~\ref{eq:dense-loop} reduces to the standard non-looped scaling law in Eq.~\ref{eq:chinchilla} at $R{=}1$ (as derived in \textbf{Proposition}~\ref{prop:dense-loop-equivalence}).

\paragraph{Comparison of recurrence mappings.}
We derive the fitted laws with the same parametric fitting and experimental sweep over $(R,N,D)$, as detailed in Appendix~\ref{app:staged-fitting}.
Figure~\ref{fig:dense-loop}(b) compares the predictive loss over recurrence for loop scaling laws under different recurrence mappings, where each law is fitted on the sweep over $N$, $D$, $R\leq8$ and extrapolated to the held-out $R{=}16$. The linear mapping substantially overestimates the effective-parameter gain, while the power-law mapping captures diminishing gains but remains overly optimistic beyond the fitted range. In contrast, the bounded mapping captures the loss plateau. The held-out evaluation over unseen $N, D, R$ in Figure~\ref{fig:dense-loop}(c) further shows the law with bounded recurrence mapping achieves the lowest held-out RMSE, indicating it better captures the scaling trends across model size, data, and recurrence.

\subsection{MoE Loop Scaling Law}
\label{sec:loop_moe_scaling_law}

The loop scaling law in Section~\ref{sec:loop_scaling_law} assumes dense looped models. In looped MoE models, sparse routing may select different experts across recurrent passes, allowing tokens to traverse different parameter paths despite weight sharing (Figure~\ref{fig:loopmoe}(a)). Motivated by this expert-path diversity (see analysis in Appendix~\ref{app:expert-path-diversity}), we introduce MoE sparsity as a condition for recurrence mapping, allowing sparsity to modulate the effective-parameter gain from looping.

\begin{figure}[!t]
\centering
\begin{minipage}[c]{0.60\textwidth}
    \centering
    \includegraphics[width=\linewidth]{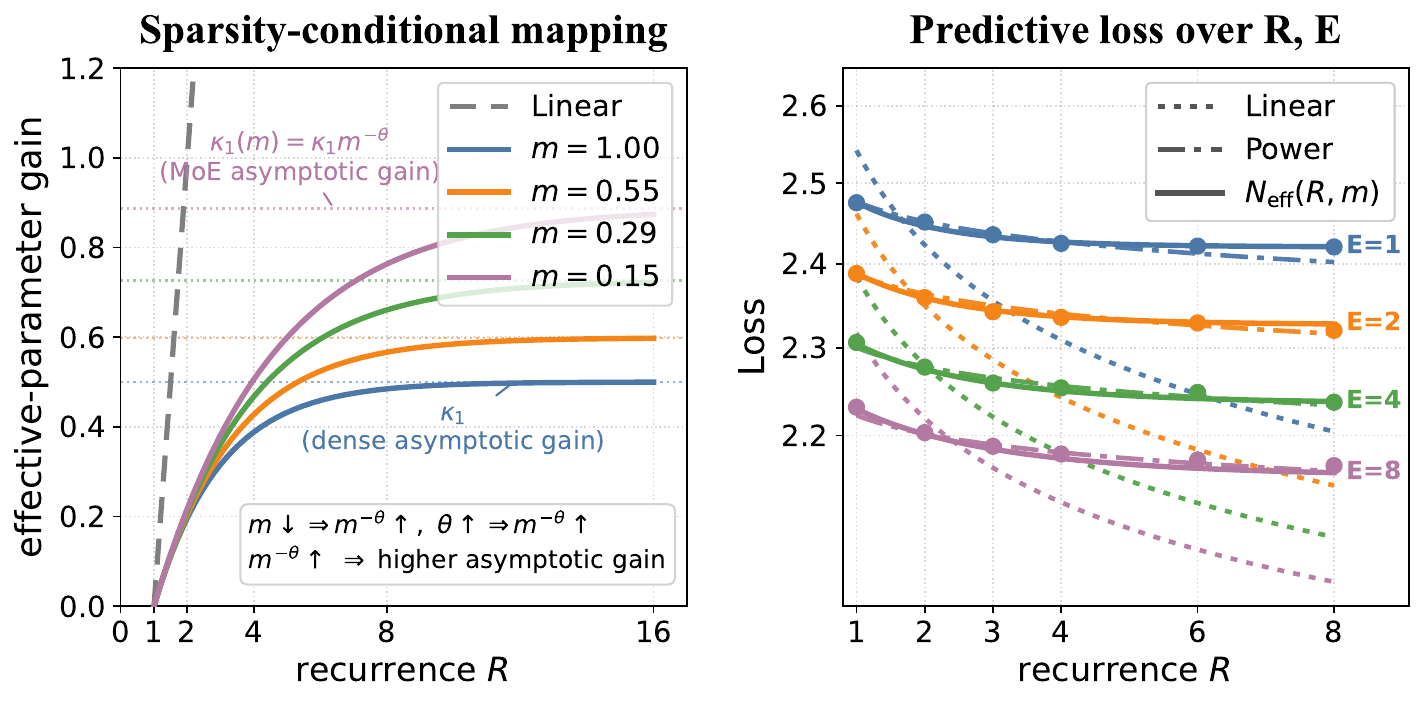}
\end{minipage}\hfill
\begin{minipage}[c]{0.38\textwidth}
    \centering
    {\scriptsize\bfseries MoE recurrence mappings for $N_{\text{eff}}$\par}
    \scriptsize
    \setlength{\tabcolsep}{1.25pt}
    \renewcommand{\arraystretch}{1.0}
    \resizebox{0.96\linewidth}{!}{%
    \begin{tabular}{@{}l|c@{}}
    \toprule
    \textbf{Mapping} & $N_{\text{eff}}-N_{\text{act}}$ \\
    \midrule
    \textbf{Linear} & $(R{-}1)N_{\text{loop}}$ \\
    \textbf{Power law} & $(R^{\varphi}{-}1)N_{\text{loop}}$ \\
    \textbf{Bounded} $N_{\text{eff}}(R)$ & $\kappa_1N_{\text{loop}}\!\left(1{-}e^{-(R-1)/\kappa_2}\right)$ \\
    \textbf{Bounded} $N_{\text{eff}}(R,m)$ & $\kappa_1(m)N_{\text{loop}}\!\left(1{-}e^{-(R-1)/\kappa_2(m)}\right)$ \\
    \bottomrule
    \end{tabular}}\\[5pt]
    {\scriptsize\bfseries Held-out prediction RMSE ($\downarrow$)\par}
    \scriptsize
    \setlength{\tabcolsep}{1.5pt}
    \renewcommand{\arraystretch}{1.0}
    \resizebox{0.82\linewidth}{!}{%
    \begin{tabular}{@{}lcccc@{}}
    \toprule
    \textbf{Evaluation} & \textbf{Linear} & \textbf{Power} & \textbf{$N_{\text{eff}}(R)$} & \textbf{$N_{\text{eff}}(R,m)$} \\
    \midrule
    \textbf{Held-out $R$} & 0.2847 & 0.0480 & 0.0190 & \textbf{0.0100} \\
    \textbf{Held-out $E$} & 0.0896 & 0.0113 & 0.0075 & \textbf{0.0047} \\
    \textbf{Held-out $N$} & 0.0656 & 0.0077 & 0.0065 & \textbf{0.0050} \\
    \textbf{Held-out $D$} & 0.0824 & 0.0074 & 0.0058 & \textbf{0.0043} \\
    \bottomrule
    \end{tabular}}
\end{minipage}
\par
\noindent
\makebox[0.30\textwidth]{\small (a)}%
\makebox[0.30\textwidth]{\small (b)}%
\hfill
\makebox[0.38\textwidth]{\small (c)}
\caption{(a) Illustration of effective-parameter gain with sparsity-conditional recurrence mapping. (b) Predictive loss under different MoE recurrence mappings fitted on runs with $R\leq8$; markers denote observations used for fitting. (c) Formula on MoE recurrence mappings and their effective-parameter gains (top), and their evaluation on held-out recurrence $R$, expert count $E$, model size $N$, and data $D$ with RMSE scores (bottom).}
\label{fig:moe-recurrence-mapping}
\end{figure}

\paragraph{A sparsity-conditional recurrence mapping.}
Let $m=N_{\text{act}}/N_{\text{total}}\in(0,1]$ denote the MoE active-parameter ratio, where smaller $m$ indicates greater sparsity and $m{=}1$ is the dense case. For top-$k$ routing over $E_{\text{route}}$ experts, $m\approx k/E_{\text{route}}$. We extend the dense bounded mapping in Eq.~\ref{eq:bounded-dense} to looped MoE models by replacing $N$ with $N_{\text{act}}$ and parameterizing its coefficients as functions of $m$:
\begin{equation}
N_{\text{eff}}(R,m)
=N_{\text{act}}+\kappa_1(m)N_{\text{loop}}
\left(1-e^{-(R-1)/\kappa_2(m)}\right),
\quad
\kappa_j(m)=\kappa_jm^{-\theta}\ (j=1,2).
\label{eq:bounded-moe-revised}
\end{equation}
The fitted sparsity-scaling exponent $\theta\geq0$ quantifies how strongly sparsity amplifies the parameter gain from recurrence, with $\theta=0$ recovering a sparsity-independent recurrence mapping. As sparsity increases ($m$ decreases), the factor $m^{-\theta}$ lifts the recurrence curve vertically through $\kappa_1(m)$ and stretches it horizontally through $\kappa_2(m)$, thus raising the asymptotic effective-parameter gain that is approached over more recurrent passes, as illustrated in Figure~\ref{fig:moe-recurrence-mapping}(a).

\textbf{\textit{Boundary properties.}}
The sparsity-conditional mapping in Eq.~\ref{eq:bounded-moe-revised} recovers the non-looped baseline at $R{=}1$, and approaches a finite effective-parameter bound as $R{\to}\infty$. It also recovers the dense bounded mapping in Eq.~\ref{eq:bounded-dense} at $m{=}1$, and approaches a linear-in-$R$ limit as $m{\to}0$:
\begin{equation}
\begin{aligned}
N_{\text{eff}}(1,m)&=N_{\text{act}},
&\lim_{R\to\infty}N_{\text{eff}}(R,m)
&=N_{\text{act}}+\kappa_1m^{-\theta}N_{\text{loop}},\\
N_{\text{eff}}(R,1)&=N_{\text{eff}}^{\text{bounded}}(R),
&\lim_{m\to0}N_{\text{eff}}(R,m)
&=N_{\text{act}}+\frac{\kappa_1}{\kappa_2}(R-1)N_{\text{loop}},\quad \theta>0.
\end{aligned}
\label{eq:bounded-moe-boundaries-revised}
\end{equation}
Thus, at any fixed $m{>}0$, looping contributes at most $\kappa_1m^{-\theta}N_{\text{loop}}$ additional effective parameters; in the extreme-sparsity limit $m\to0$ (expert count $E\to\infty$), the gain grows linearly with $R$.
These asymptotic limits are properties rather than guarantees beyond the observed recurrence range.

\paragraph{MoE loop scaling law.}
For a looped MoE model with $R$ recurrence loops and active-parameter ratio $m$, we replace the active-parameter term $N_{\text{act}}$ in MoE scaling law~\citep{clark2022unified,ludziejewski2025joint} with the sparsity-conditional effective parameter count $N_{\text{eff}}(R,m)$ in Eq.~\ref{eq:bounded-moe-revised}:
\begin{equation}
\mathcal{L}(N_{\text{act}},D,R,E,m)
=A\,\hat{E}^{\,\delta}\,N_{\text{eff}}(R,m)^{\,\alpha+\gamma\ln\hat{E}}
+B\,\hat{E}^{\,\omega}\,D^{\,\beta+\zeta\ln\hat{E}}+c,
\label{eq:moe-full}
\end{equation}
where $N_{\text{eff}}(R,m)$ introduces the fitted coefficients $\{\kappa_1,\kappa_2,\theta\}$ alongside the base MoE scaling-law coefficients $\{A,\alpha,B,\beta,c,\delta,\gamma,\omega,\zeta,E_{\text{start}},E_{\text{max}}\}$. Following \citet{clark2022unified}, $\hat{E}$ is a monotonic transformation: $\hat{E}^{-1}=\bigl(E-1+\bigl(E_{\text{start}}^{-1}-E_{\text{max}}^{-1}\bigr)^{-1}\bigr)^{-1}+E_{\text{max}}^{-1}$, where $E$ denotes the number of experts under top-1 routing~\citep{clark2022unified,ludziejewski2025joint}. For top-$k$ routing, we define the effective expert expansion as $E=E_{\text{route}}/k$, consistent with the parameterization of \citet{chen2026mobilemoe}.

\textbf{\textit{Reduced forms.}}
The MoE loop scaling law in Eq.~\ref{eq:moe-full} subsumes prior scaling laws as special cases: (1) $\mathcal{L}|_{E=1}$, at $E=1$, it recovers the dense loop scaling law in Eq.~\ref{eq:dense-loop} after reparameterization; (2) $\mathcal{L}|_{R=1}$, at $R=1$, it recovers the standard MoE scaling law; and (3) $\mathcal{L}|_{R=1,E=1}$, at $E=1, R=1$, it recovers the standard dense scaling law in Eq.~\ref{eq:chinchilla}. Full derivations are given in \textbf{Proposition}~\ref{prop:moe-loop-equivalence}.

\paragraph{Comparison of MoE recurrence mappings.}
We derive the fitted MoE loop scaling laws using the same parametric fitting and experimental sweep over $(R,N_{\text{act}},D,E)$, as detailed in Appendix~\ref{app:staged-fitting}.
Figure~\ref{fig:moe-recurrence-mapping}(b) illustrates the predictive loss over recurrence $R$ and MoE sparsity varied by expert count $E$, where each law is fitted on the sweep over varying $N_{\text{act}}$, $D$, $E$, and $R{\leq}8$. The same fitted laws are evaluated on held-out recurrence $R{=}16$ in Figure~\ref{fig:moe-recurrence-mapping}(c).
As observed in the dense comparison, the linear mapping overestimates recurrence gains and the power-law mapping remains optimistic; importantly, neither captures how saturation varies with sparsity. In contrast, our sparsity-conditional bounded mapping tracks the loss plateau across expert counts.
Figure~\ref{fig:moe-recurrence-mapping}(c) further shows the law with sparsity-conditional bounded mapping in Eq.~\ref{eq:bounded-moe-revised} achieves the lowest held-out RMSE across all held-out axes, as compared to the laws with linear, power-law in Eq.~\ref{eq:recurrence-mappings-revised} and sparsity-independent mappings Eq.~\ref{eq:bounded-dense}.
This confirms our formulated MoE loop scaling law in Eq.\ref{eq:moe-full} has a better predictive fit and captures how sparsity modulates the asymptotic gain from recurrence.
\section{Experiments and Findings}
\label{sec:experiments}

\subsection{Scaling Experiments and Analysis}
\label{sec:scaling-experiments}

\paragraph{Scaling experiments and fitted law.}
We jointly sweep over four scaling axes: model parameters $N_\text{act}\in\{0.3,0.6,1.0\}\text{B}$, training tokens $D\in\{100,200,\ldots,500\}\text{B}$, expert count $E\in\{1,2,4,8,16\}$, and recurrence $R\in\{1,2,3,4,6,8\}$.
We use standard middle-cycle looping~\citep{geiping2025scaling} to loop over the middle block, leaving the first two prelude layers and last two coda layers unshared.
As our formulation is agnostic to specific looping strategies, we leave other strategies to future work.
The full training and law fitting details are provided in Appendix~\ref{app:scaling-exp}.

\begin{figure}[!t]
    \centering
    \def\figbodywidth{1.0}\def\figpairwidth{0.498}
    \begin{minipage}{\figbodywidth\textwidth}
    \centering
    \begin{minipage}[t]{\figpairwidth\linewidth}
    \centering
    \includegraphics[width=\linewidth]{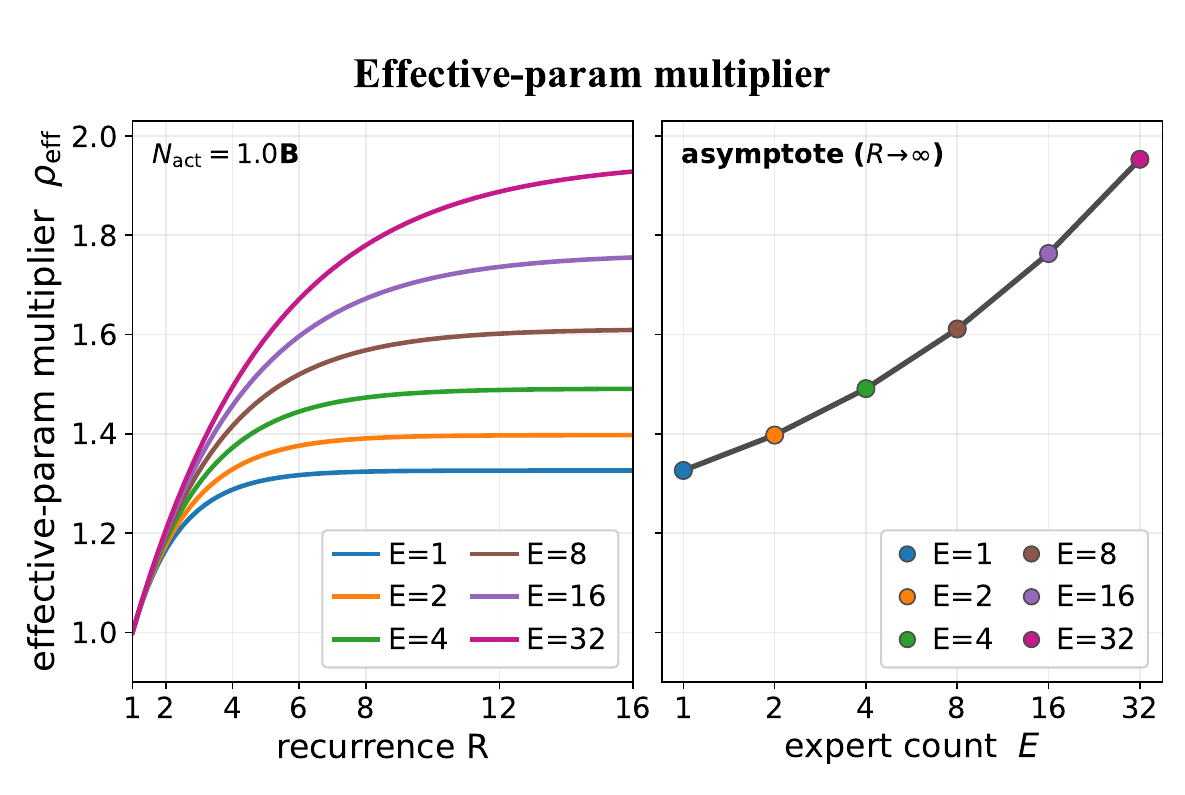}\\
    \makebox[0.5\linewidth]{\small (a)}%
    \makebox[0.5\linewidth]{\small (b)}
    \end{minipage}\hfill
    \begin{minipage}[t]{\figpairwidth\linewidth}
    \centering
    \includegraphics[width=\linewidth]{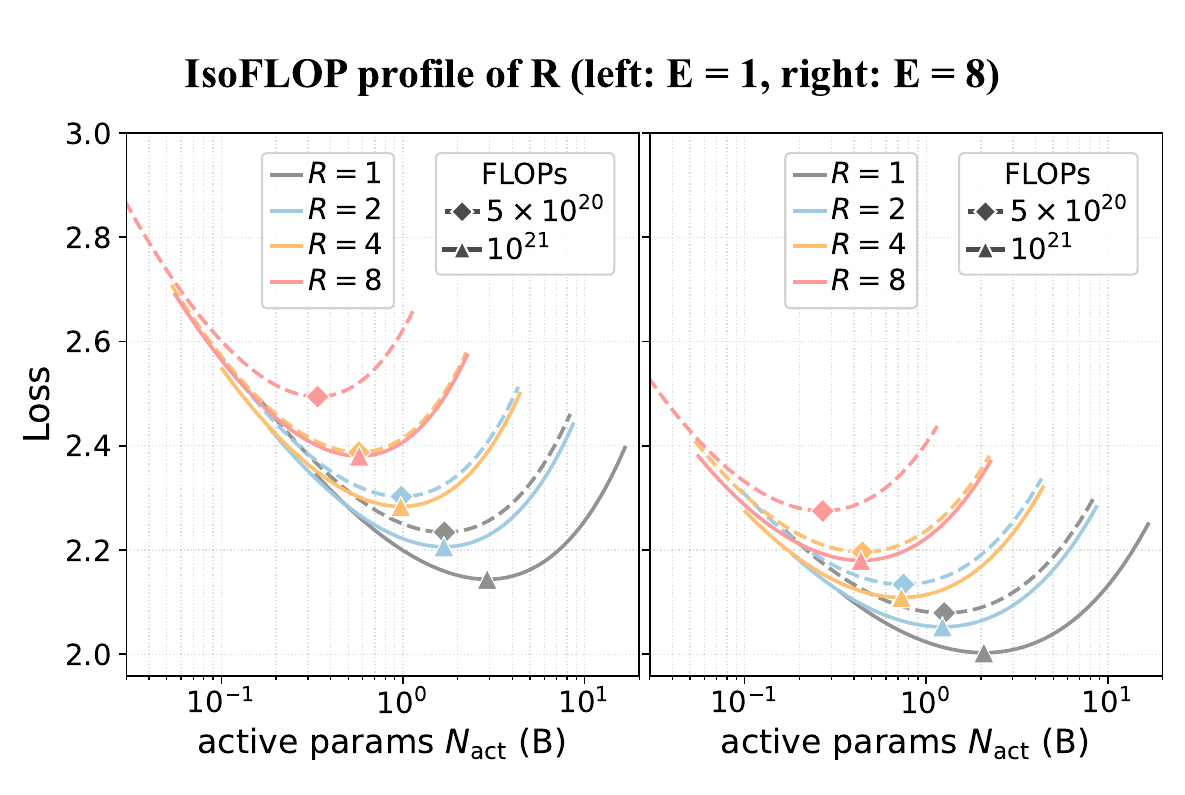}\\
    \makebox[0.5\linewidth]{\small (c)}%
    \makebox[0.5\linewidth]{\small (d)}
    \end{minipage}
    \end{minipage}
\caption{(a,b) Effective-parameter multiplier over recurrence at $N_{\text{act}}{=}1.0$B, and its asymptote over expert count. (c,d) Recurrence IsoFLOP profiles for dense ($E{=}1$) and MoE ($E{=}8$) models; markers denote minima.}
    \label{fig:effective-capacity}
\end{figure}

\paragraph{\textit{How do recurrence and sparsity improve effective capacity from looping?}}
To quantify the effective capacity gain from the fitted MoE loop scaling law, we define the effective-parameter multiplier:
\begin{equation}
\rho_{\text{eff}}(R,m)
=\frac{N_{\text{eff}}(R,m)}{N_{\text{act}}}
=1+\kappa_1(m)\frac{N_{\text{loop}}}{N_{\text{act}}}
\left(1-e^{-(R-1)/\kappa_2(m)}\right).
\label{eq:rho-effective}
\end{equation}
Its asymptote is $\rho_{\text{eff}}(\infty,m)=1+\kappa_1(m)N_{\text{loop}}/N_{\text{act}}$, which characterizes the maximum effective-parameter multiplier relative to $N_{\text{act}}$. Figure~\ref{fig:effective-capacity}(a) shows that, at fixed $N_{\text{act}}$, the effective-parameter multiplier increases with larger recurrence but approaches a finite asymptote, while greater sparsity raises this asymptote and delays saturation over recurrent passes; Figure~\ref{fig:effective-capacity}(b) shows this asymptote itself rises with expert count.
Alternatively, this gain can be measured relative to the recurrent-block parameters $N_{\text{loop}}$ as the effective-parameter gain $g_{\text{eff}}=(N_{\text{eff}}-N_{\text{act}})/N_{\text{loop}}$ (see Figures~\ref{fig:loopmoe}(c),~\ref{fig:dense-loop}(a), and~\ref{fig:moe-recurrence-mapping}(a)), which yields the same scaling trends over recurrence and sparsity, as summarized below.
\begin{findingbox}{\textcolor{navy}{Finding:} \mdseries\color{black} Scaling recurrence adds effective-parameter gain up to a finite, sparsity-dependent asymptote, while scaling sparsity raises and sustains this gain over more recurrent passes.}
\end{findingbox}

\paragraph{\textit{How do recurrence and sparsity push the IsoFLOP frontier?}}
We examine the scaling behavior over recurrence and sparsity on an IsoFLOP basis based on the fitted law.
Figures~\ref{fig:effective-capacity}(c,d) show predicted loss against $N_{\text{act}}$ under fixed training compute: $5{\times}10^{20}$ and $10^{21}$ FLOPs. When increasing sparsity from $E{=}1$ (dense) to $E{=}8$ (MoE), i.e., comparing Figure~\ref{fig:effective-capacity}(c) and (d), lower loss is attained under the same compute budget, indicating that higher sparsity pushes the IsoFLOP performance frontier. When increasing recurrence at fixed sparsity, higher recurrence can outperform lower recurrence at the same active parameter count $N_{\text{act}}$ with additional compute; for example, $R{=}2$ at $10^{21}$ FLOPs achieves lower loss than $R{=}1$ at $5{\times}10^{20}$ FLOPs for both $E{=}1$ and $E{=}8$, as summarized below.

\begin{findingbox}{\textcolor{navy}{Finding:} \mdseries\color{black} Higher sparsity improves the IsoFLOP frontier at matched compute, while recurrence pushes this frontier further at the same parameter size when given more compute.}
\end{findingbox}

\subsection{Compute- and Memory-Optimal Recurrence and Sparsity}
\label{sec:optimal-recurrence-sparsity}

\paragraph{\textit{Given the fitted law, which recurrence and sparsity should be selected under resource constraints?}} Our MoE loop scaling law provides a principled foundation for optimizing architectures under practical resource constraints of training compute, and deployment memory. At fixed weight memory, recurrence $R$ trades additional compute for better performance; whereas at fixed compute, sparsity trades additional weight memory for improved performance.
These complementary tradeoffs motivate our following analyses that apply the fitted law to predict the compute-optimal recurrence at fixed sparsity, memory-optimal sparsity at fixed recurrence, and their joint optimum

\begin{figure}[!t]
    \centering
    \def\figbodywidth{1.0}
    \begin{minipage}{\figbodywidth\textwidth}
    \centering
    \includegraphics[width=\linewidth]{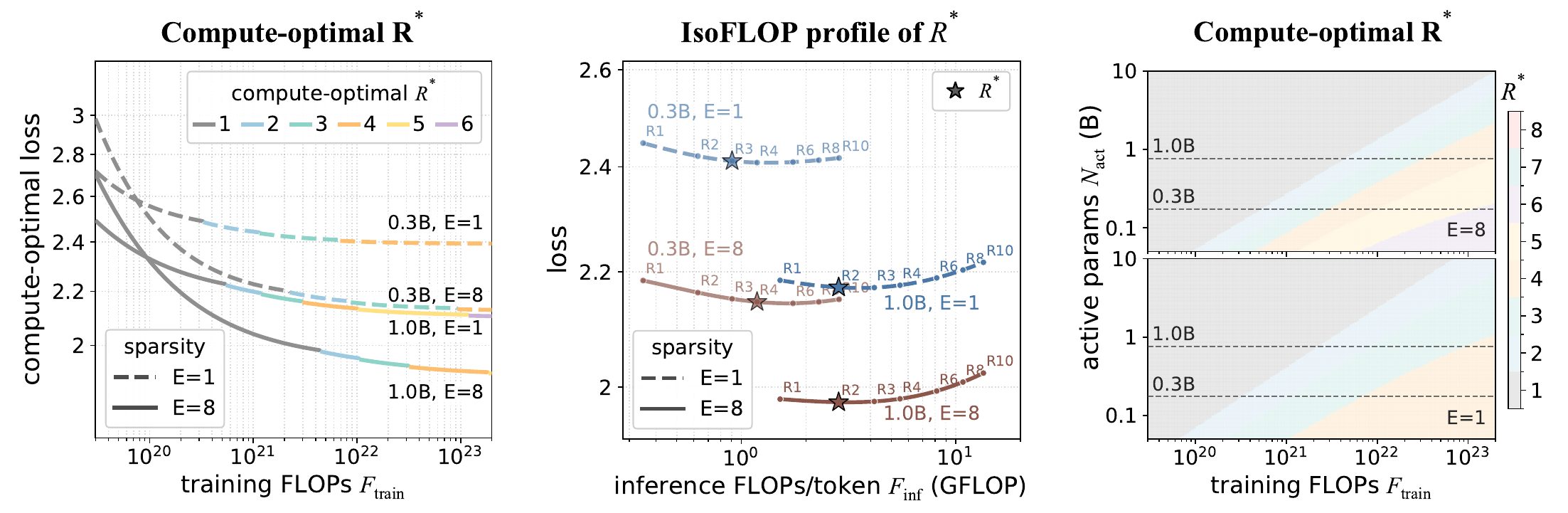}\\
    \makebox[0.333\linewidth]{\small (a)}%
    \makebox[0.333\linewidth]{\small (b)}%
    \makebox[0.333\linewidth]{\small (c)}
    \end{minipage}
   \caption{Predicted compute-optimal recurrence $R^{\star}$ of dense ($E{=}1$) and MoE ($E{=}8$) models. (a) Recurrence with compute-optimal loss across training-compute budgets. (b) IsoFLOP profiles over recurrence at $\bar F_{\text{train}}{=}5{\times}10^{21}$ FLOPs; stars mark $R^{\star}$. (c) $R^{\star}$ across compute and active model size for $E{=}8$ (top) and $E{=}1$ (bottom).}
    \label{fig:compute-optimal-r}
\end{figure}

\paragraph{Compute-optimal recurrence at fixed sparsity.} At fixed model sparsity (fixed expert count $E$), increasing recurrence $R$ leaves the model weights $N_{\text{total}}$ and weight memory $\mathcal{M}_{\text{weight}}$ unchanged, but increases the training compute through unroll parameters $N_{\text{unroll}}(R)$. Therefore, recurrence can be compute-optimal if it achieves the minimal loss among recurrences trained under the same compute, while providing a meaningful loss reduction over the preceding recurrence:
\begin{equation}
R^{\star}=\arg\min_R\mathcal{L}(N_{\text{act}},D,R,E,m),\qquad \text{s.t.}\quad 6N_{\text{unroll}}(R)D=\bar F_{\text{train}},\quad \Delta\mathcal{L}(R)\geq\epsilon.
\label{eq:compute-optimal-r}
\end{equation}
$\bar F_{\text{train}}$ is the given training-compute budget. For $R\geq2$, $\Delta\mathcal{L}(R)=\mathcal{L}_{R-1}{-}\mathcal{L}_R$ measures the predicted loss reduction from the additional recurrent pass $R{-}1{\to}R$. The tolerance $\epsilon$ specifies the minimum meaningful loss reduction. In practice, we set $\epsilon$ to the fitted-law RMSE, below which predicted improvements cannot be reliably distinguished from fitting error. If no pass satisfies $\Delta\mathcal{L}(R)\geq\epsilon$, $R^{\star}{=}1$.
Figure~\ref{fig:compute-optimal-r} shows three trends on compute-optimal $R^{\star}$: (a) $R^{\star}$ increases with more training-compute budget and is higher for smaller active models; (b) along each IsoFLOP profile, the selected $R^{\star}$ lies at or near the loss minimum, with subsequent recurrent passes providing negligible gains; and (c) across the $F_{\text{train}}$--$N_{\text{act}}$ design space, greater sparsity shifts the compute-optimal regime toward higher recurrence. These trends can be summarized below.

\begin{findingbox}{\textcolor{navy}{Finding:} \mdseries\color{black} Larger training compute budgets and greater sparsity favor higher recurrence.}
\end{findingbox}

\begin{figure}[!t]
    \centering
    \def\figbodywidth{1.0}
    \begin{minipage}{\figbodywidth\textwidth}
    \centering
    \includegraphics[width=\linewidth]{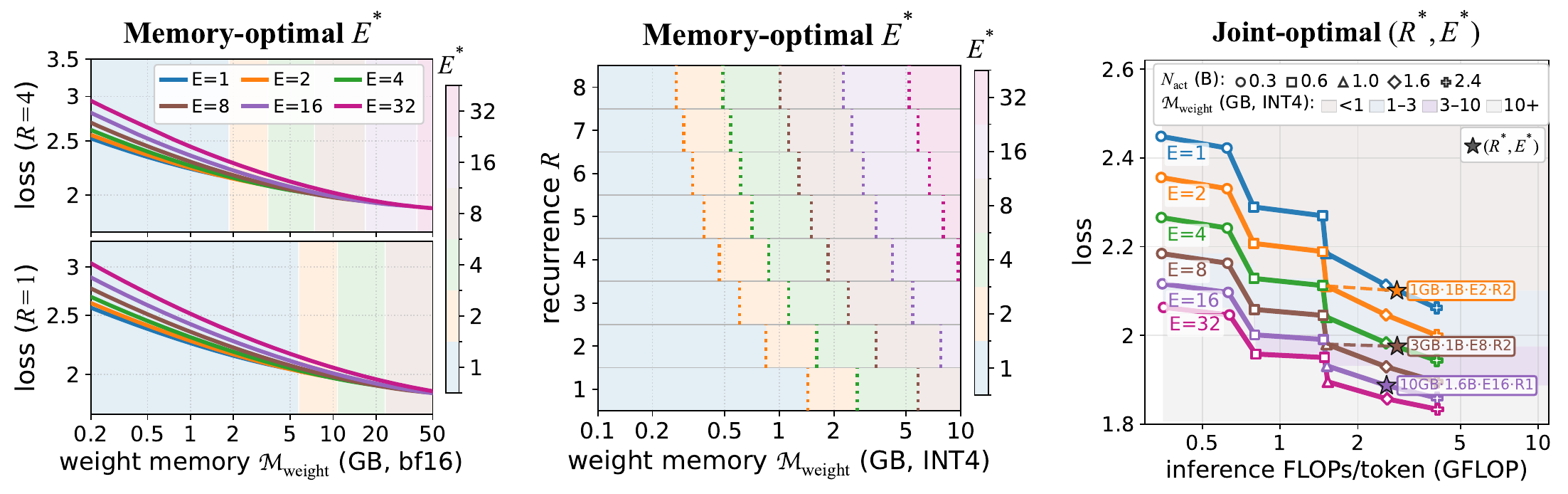}\\
    \makebox[0.333\linewidth]{\small (a)}%
    \makebox[0.333\linewidth]{\small (b)}%
    \makebox[0.333\linewidth]{\small (c)}
    \end{minipage}
\caption{Predicted memory-optimal expert count $E^{\star}$ and joint optimum $(N_{\text{act}}^{\star},E^{\star},R^{\star})$ at $\bar F_{\text{train}}{=}5{\times}10^{21}$ FLOPs. (a) $E^{\star}$ across bf16 weight-memory budgets at $R{=}4$ (top) and $R{=}1$ (bottom). (b) $E^{\star}$ across INT4 weight memory and recurrence. (c) Predicted joint optimum $(N_{\text{act}}^{\star},E^{\star},R^{\star})$ across INT4 weight memory.}
    \label{fig:memory-optimal-e}
\end{figure}

\paragraph{Memory-optimal sparsity at fixed recurrence.}
At fixed recurrence $R$, increasing sparsity through expert count $E$ expands the total parameters $N_{\text{total}}$ and thus weight memory, while leaving the active compute unchanged. Therefore, sparsity can be memory-optimal by selecting the expert count $E$ that achieves the minimal loss when trained under the same compute:
\begin{equation}
E^{\star}=\arg\min_E\mathcal{L}(N_{\text{act}},D,R,E,m),\quad
\text{s.t.}\; F_{\text{train}}=\bar F_{\text{train}},\;
\mathcal{M}_{\text{weight}}\leq M_{\text{budget}},\;
\Delta\mathcal{L}(E)\geq\epsilon.
\label{eq:memory-optimal-e}
\end{equation}
$M_{\text{budget}}$ is the weight-memory budget and $\mathcal{M}_{\text{weight}}=b_wN_{\text{total}}/8$, where $b_w$ is the weight precision in bits. Total deployment memory may include KV cache and other runtime state, but their optimization is outside our scope; we therefore consider weight memory only.
For expert counts $\{E_i\}$ and $i{\geq}2$, $\Delta\mathcal{L}(E_i)=\mathcal{L}_{E_{i-1}}{-}\mathcal{L}_{E_i}$ measures the loss reduction from expert expansion $E_{i-1}{\to}E_i$. The same tolerance $\epsilon$ as in Eq.~\ref{eq:compute-optimal-r} requires each expansion yields a meaningful loss reduction. If no expansion satisfies $\Delta\mathcal{L}(E_i)\geq\epsilon$, $E^{\star}{=}E_{1}{=}1$.
Figure~\ref{fig:memory-optimal-e}(a) and (b) show two trends on memory-optimal $E^{\star}$: (a) $E^{\star}$ increases with more memory budget, while higher recurrence favors larger $E^{\star}$ under the same memory; and (b) across the $\mathcal{M}_{\text{weight}}$--$R$ design space, larger memory budgets and higher recurrence shift the optimal regime toward greater sparsity. These trends are summarized below.

\begin{findingbox}{\textcolor{navy}{Finding:} \mdseries\color{black} Larger weight memory budgets and higher recurrence favor greater sparsity.}
\end{findingbox}

\paragraph{Joint optimum on recurrence and sparsity.} Given a training-compute budget and a memory budget, the recurrence $R$ and sparsity (varied by expert count $E$) can be jointly compute- and memory-optimal if they achieve the minimal loss under the same resource constraints:
\begin{align}
&(N_{\text{act}}^{\star},E^{\star},R^{\star})
=\underset{N_{\text{act}},E,R}{\arg\min}\;
\mathcal{L}(N_{\text{act}},D,R,E,m)
\label{eq:joint-optimum}\\[-0.3em]
&\text{s.t.}\; \text{compute: }6N_{\text{unroll}}(R)D=\bar F_{\text{train}},\;
\Delta\mathcal{L}(R)\geq\epsilon;\;
\text{memory: }\mathcal{M}_{\text{weight}}\leq M_{\text{budget}},\;
\Delta\mathcal{L}(E)\geq\epsilon.
\notag
\end{align}
Given a training compute budget $\bar F_{\text{train}}$ and weight memory budget $M_{\text{budget}}$,
we use the fitted scaling law to identify the predicted optimal looped MoE configuration from a model ladder spanning active model size, expert count, and recurrence (Appendix~\ref{app:scaling-ladder}), which guides model design under resource constraints.
For each candidate configuration $(N_{\text{act}},E,R)$, we set $D=\bar F_{\text{train}}/[6N_{\text{unroll}}(R)]$, retain candidates satisfying the compute and memory constraints, and select the optimum with lowest predicted loss.
Figure~\ref{fig:memory-optimal-e}(c) shows that under fixed training compute, tighter memory favors higher recurrence and smaller expert counts, e.g., selecting a 1.0B active model with $E{=}8$ and $R{=}2$ at 3\,GB; whereas larger memory favors greater sparsity and less recurrence, e.g., selecting a 1.6B active model with $E{=}16$ and $R{=}1$ at 10\,GB, as summarized below.

\begin{findingbox}{\textcolor{navy}{Finding:} \mdseries\color{black} At fixed compute, tight memory favors recurrence, and more memory favors sparsity.}
\end{findingbox}

\subsection{Downstream Scaling with Recurrence and Sparsity}
\label{sec:downstream-scaling}

\paragraph{Empirical scaling on downstream tasks.}
We extend our scaling analysis of looped MoE models to downstream tasks, covering 14 benchmarks spanning five categories: reasoning (BBH, GSM8K), science (ARC-C/E, OpenBookQA), commonsense (HellaSwag, PIQA, SIQA, WinoGrande), reading (BoolQ, DROP), and knowledge (MMLU, Natural Questions, TriviaQA). We report \emph{Overall} as the mean over 14 benchmarks.
Motivated by prior work showing that recurrent depth can preferentially improve latent reasoning~\citep{saunshi2025reasoning,geiping2025scaling,zhu2025scaling}, we also report \empty{Reasoning} as the mean of GSM8K and BBH.
In Figure~\ref{fig:downstream-scaling}, we evaluate downstream performance as a function of training compute, comparing models with active sizes of 0.3B, 0.6B, and 1.0B under varying expert count and recurrence at matched training FLOPs.
The complete benchmark suite and evaluation protocols are provided in Appendix~\ref{app:evaluation}.

\begin{figure}[!t]
    \centering
    \def\figbodywidth{1.0}\def\figpairwidth{0.499}
    \begin{minipage}{\figbodywidth\textwidth}
    \centering
    \begin{minipage}[t]{\figpairwidth\linewidth}
    \centering
    \includegraphics[width=\linewidth]{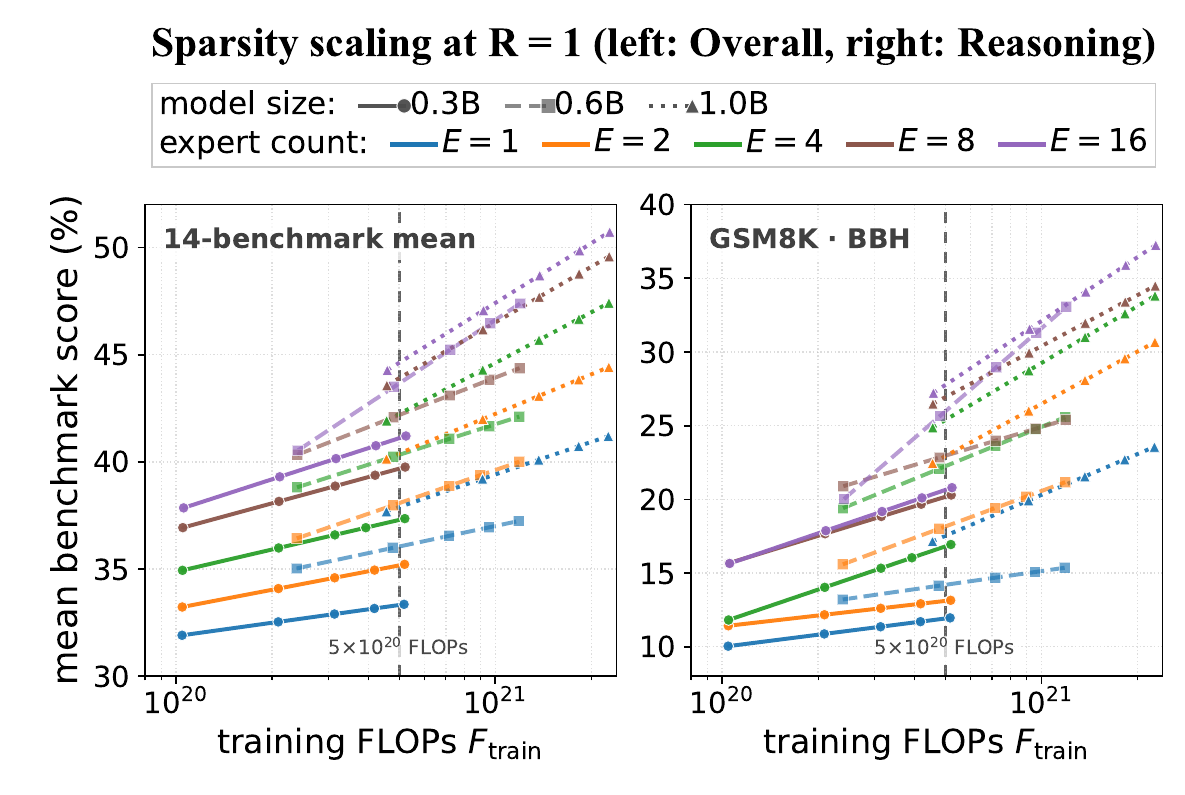}\\
    \makebox[0.5\linewidth]{\small (a)}%
    \makebox[0.5\linewidth]{\small (b)}
    \end{minipage}\hfill
    \begin{minipage}[t]{\figpairwidth\linewidth}
    \centering
    \includegraphics[width=\linewidth]{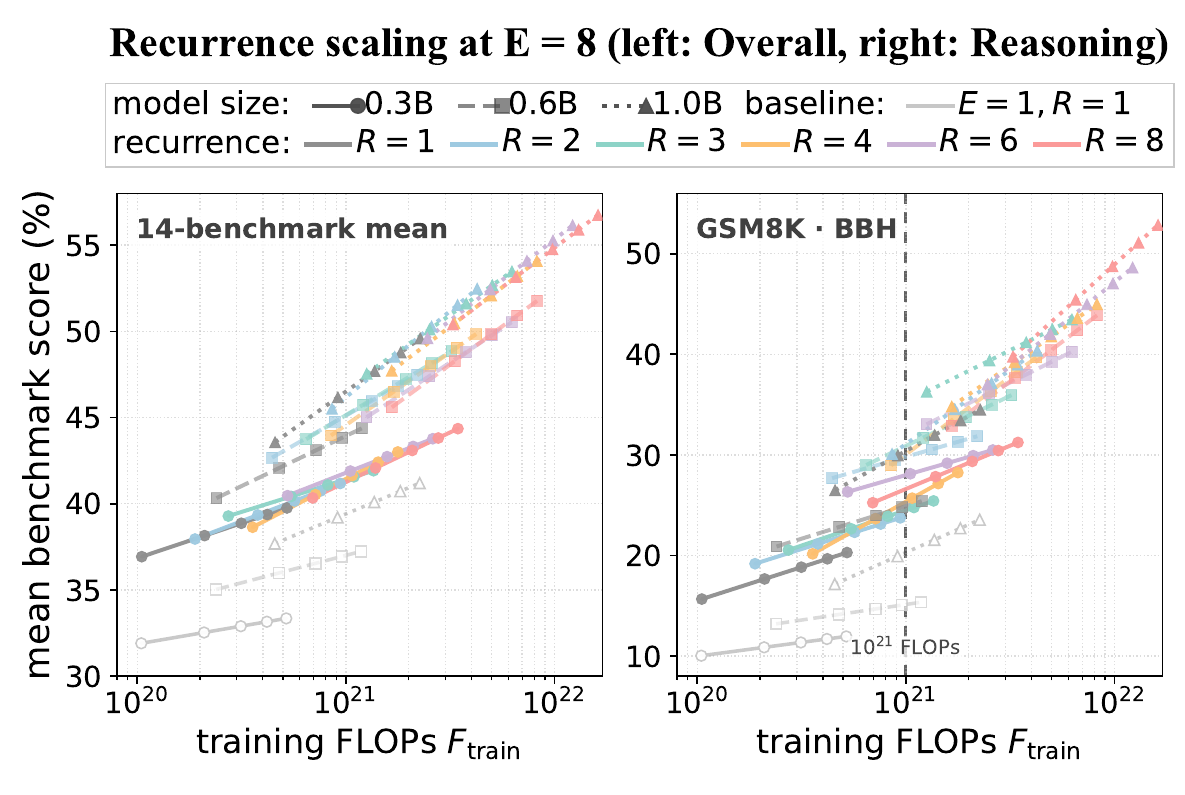}\\
    \makebox[0.5\linewidth]{\small (c)}%
    \makebox[0.5\linewidth]{\small (d)}
    \end{minipage}
    \end{minipage}
    \caption{Downstream scaling of sparsity and recurrence. (a,b) Scaling $E$ at $R{=}1$; (c,d) Scaling $R$ at $E{=}8$, with the dense baseline $E{=}1,R{=}1$. (a,c) report \emph{Overall} performance, and (b,d) report \emph{Reasoning} performance; dashed lines mark compute-matched comparisons.}
    \label{fig:downstream-scaling}
\end{figure}

\paragraph{\textit{How do recurrence and sparsity translate into downstream performance?}} Figure~\ref{fig:downstream-scaling} shows the two scaling axes provide distinct but complementary gains. In panels (a,b), increasing sparsity at fixed $R{=}1$ consistently improves both Overall and Reasoning across all three model sizes. These sparsity gains can offset a ${\sim}3\times$ increase in active model size: at $5{\times}10^{20}$ FLOPs, the 0.3B MoE models with $E{\geq}8$ outperform the larger 1.0B dense model on both Overall and Reasoning. In panels (c,d), increasing recurrence at fixed $E{=}8$ produces more reasoning-oriented gains. These recurrence gains can offset a ${\sim}2\times$ increase in total parameters on Reasoning: at $10^{21}$ FLOPs, the 0.3B model with $R{\geq}4$ matches or exceeds the 0.6B model with $R{=}1$, while the 0.6B model with $R{\geq}3$ matches the 1.0B model with $R{=}1$. Moreover, models that jointly increase sparsity and recurrence ($E{=}8,R{>}1$) consistently outperform the dense baseline ($E{=}1,R{=}1$) at matched training compute, showing that their gains accumulate when scaled together.
Thus, sparsity and recurrence provide complementary routes to parameter-efficient scaling, with sparsity achieving ${\sim}3\times$ active-parameter efficiency on overall performance and recurrence achieving ${\sim}2\times$ total-parameter efficiency on reasoning.

\begin{table}[!t]
\centering
\caption{Downstream results of A0.6B-2.9B MoE and A0.3B-1.3B LoopMoE across test-time recurrence $R$, trained at matched compute FLOPs ${\approx}1.5{\times}10^{22}$. Relative inference compute is normalized to the non-looped baseline. \textcolor{green!60!black}{$\Delta$} reports gains over LoopMoE at $R{=}1$. Evaluation protocols are provided in Appendix~\ref{app:evaluation}.}
\label{tab:operating-recurrence}
\setlength{\tabcolsep}{2pt}
\resizebox{\textwidth}{!}{%
\begin{tabular}{@{}l|c|c|cc ccc cccc cc ccc l@{}}
\toprule
\multirow{2}{*}{\textbf{Model}} &
\multirow{2}{*}{$\mathbf{R}$} &
\multirow{2}{*}{$\displaystyle\frac{F_{\text{inf}}^{\text{0.3B}}(R)}{F_{\text{inf}}^{\text{0.6B}}(1)}$} &
\multicolumn{2}{c}{\textbf{Reasoning}} &
\multicolumn{3}{c}{\textbf{Science}} &
\multicolumn{4}{c}{\textbf{Commonsense Reasoning}} &
\multicolumn{2}{c}{\textbf{Reading}} &
\multicolumn{3}{c}{\textbf{Knowledge}} & \\
\cmidrule(lr){4-5}
\cmidrule(lr){6-8}
\cmidrule(lr){9-12}
\cmidrule(lr){13-14}
\cmidrule(lr){15-17}
& & &
BBH$^3$ & GSM8K$^8$ & ARC-C$^{25}$ & ARC-E & OBQA & HS & PIQA & SIQA & Wino & BoolQ & DROP$^3$ & MMLU$^5$ & NQ$^5$ & TQA$^5$ & \textbf{Overall (\textcolor{green!60!black}{$\Delta$})} \\
\midrule
A0.6B-2.9B MoE & 1 & $1.0\times$ &
29.8 & 42.9 & 50.0 & 66.9 & 40.8 & 68.1 & 77.2 & 51.6 & 64.4 & 70.1 & 39.7 & 46.9 & 13.8 & 39.8 & 50.2 \\
\midrule
\multirow{5}{*}{A0.3B-1.3B LoopMoE}
& 1 & $0.4\times$ &
18.6 & 10.5 & 36.1 & 57.3 & 33.8 & 52.7 & 70.7 & 43.8 & 54.6 & 58.1 & 23.0 & 29.2 & 5.7 & 18.2 & 36.6 \\
& 2 & $0.8\times$ &
26.7 & 30.0 & 42.2 & 60.3 & 38.2 & 59.9 & 72.5 & 46.4 & 58.5 & 62.8 & 33.7 & 37.6 & 8.3 & 24.1 & 43.0\,{\scriptsize\textcolor{green!60!black}{$\boldsymbol{(+6.4)}$}} \\
& 3 & $1.1\times$ &
30.1 & 41.0 & 44.9 & 60.9 & 39.6 & 62.4 & 73.2 & 50.2 & 61.3 & 65.9 & 38.4 & 41.6 & 8.9 & 25.6 & 46.0\,{\scriptsize\textcolor{green!60!black}{$\boldsymbol{(+9.4)}$}} \\
& 4 & $1.5\times$ &
31.5 & 41.5 & 46.5 & 63.9 & 39.0 & 62.9 & 73.9 & 51.1 & 60.6 & 64.4 & 39.3 & 42.2 & 9.8 & 27.1 & 46.7\,{\scriptsize\textcolor{green!60!black}{$\boldsymbol{(+10.1)}$}} \\
& 5 & $1.8\times$ &
31.8 & 41.2 & 47.5 & 64.3 & 40.4 & 63.1 & 74.0 & 51.1 & 61.0 & 64.3 & 39.0 & 42.2 & 9.9 & 27.0 & 46.9\,{\scriptsize\textcolor{green!60!black}{$\boldsymbol{(+10.3)}$}} \\
\bottomrule
\end{tabular}
}%
\end{table}

\paragraph{A practical case study of looped MoE.}
As a practical extension of our scaling analysis on downstream tasks, we explore looped MoE in comparison to an approximately $2\times$ larger non-looped MoE. Specifically, taking an A0.6B-2.9B MoE (0.6B active/2.9B total, $E{=}8,R{=}1$) as the non-looped reference, we use the fitted law and apply our recurrence-selection criterion with the compute constraint (Eq.~\ref{eq:compute-optimal-r}) to derive the recurrence for an A0.3B-1.3B looped MoE (0.3B active/1.3B total, $E{=}8$). We then train the two MoE models at matched compute at trillion-token scale (${\sim}3$T tokens for A0.3B-1.3B LoopMoE and ${\sim}6$T tokens for A0.6B-2.9B MoE), and compare their downstream performance. Further derivation and training details are in Appendix~\ref{app:recurrence-selection}.

\paragraph{\textit{How does a looped MoE compare to a $2\times$ larger non-looped MoE at matched compute?}}
Table~\ref{tab:operating-recurrence} compares A0.3B-1.3B LoopMoE and A0.6B-2.9B MoE at matched training compute. Despite using ${\sim}2\times$ fewer active and total parameters, A0.3B-1.3B LoopMoE at $R\in\{4,5\}$ matches A0.6B-2.9B MoE on reasoning tasks (BBH, GSM8K) by using more inference compute, while trailing on \emph{Overall} performance.
Beyond parameter efficiency, LoopMoE supports test-time scaling on demand: varying recurrence from $R{=}1$ to $R{=}5$ raises inference compute from low to high and boosts \emph{Overall} performance by a substantial $+10.3$ points ($36.6\to46.9$; Table~\ref{tab:operating-recurrence}).

\section{Conclusion}
\label{sec:conclusion}

We introduced \textit{Loop Scaling Laws}, a unified predictive law that jointly models recurrence and sparsity alongside model size and data, closing a pressing gap between MoE and looped model scaling theories. Its bounded, sparsity-conditional mapping captures the diminishing recurrence gains observed within the evaluated range and models how sparsity raises the fitted capacity. The fitted law guides looped MoE model design by selecting recurrence and sparsity under compute and memory constraints, providing practical recipes for resource-constrained settings.

More broadly, our work reframes the scaling design space, positioning recurrence and sparsity as complementary axes for the next-generation parameter-efficient scaling paradigm. As effective scaling relies not only on how many model parameters are added, but also on how effectively additional computation can use them, these two axes should be co-designed according to resource constraints. By combining both axes properly, looped MoE models can advance the performance--efficiency frontier while enabling on-demand test-time scaling through recurrence.

To unlock the greater potential of jointly scaling recurrence and sparsity, we highlight several future directions. For modeling, more advanced looping strategies may be explored to enrich recurrent latent representations and boost the effective-capacity gain further. For scaling, looped MoE models could be generalized across broader scales, spanning from small models on the edge to large models in the cloud. For inference, the memory and latency of looped models could be further improved by optimizing recurrent states or adding early-exit gating, thus scaling recurrence only when necessary.

\bibliography{iclr2027_conference}
\bibliographystyle{iclr2027_conference}

\newpage
\appendix
\section*{APPENDIX}
\renewcommand{\thetable}{\Alph{section}.\arabic{table}}
\renewcommand{\thefigure}{\Alph{section}.\arabic{figure}}
\counterwithin{table}{section}
\counterwithin{figure}{section}
\setcounter{table}{0}
\setcounter{figure}{0}
\section{Reduced Forms of Loop Scaling laws}
\label{app:formulation-properties}

\subsection{Dense Loop Scaling Law}
\label{app:dense-reduced-form}

\begin{proposition}[Dense loop scaling law equivalence]
\label{prop:dense-loop-equivalence}
At $R{=}1$, the dense loop scaling law recovers the standard non-looped scaling law.
\end{proposition}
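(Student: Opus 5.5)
The plan is to substitute $R{=}1$ directly into the dense loop scaling law in Eq.~\ref{eq:dense-loop} and show that the effective-parameter term collapses to the stored parameter count $N$. Everything then reduces to the standard law in Eq.~\ref{eq:chinchilla}. The argument is purely algebraic: the data term $B\,D^{\beta}$ and the irreducible loss $c$ do not depend on $R$. It therefore suffices to verify the single identity $N_{\text{eff}}(1)=N$ for whichever recurrence mapping is used.

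First I will check this identity for each of the three mappings in Eq.~\ref{eq:recurrence-mappings-revised} and Eq.~\ref{eq:bounded-dense}:
\begin{itemize}
\item For the linear mapping, $(R-1)N_{\text{loop}}$ vanishes at $R{=}1$.
\item For the power-law mapping, $R^{\varphi}=1$ for every exponent $\varphi$, so $(R^{\varphi}-1)N_{\text{loop}}=0$.
\item For the bounded mapping, $1-e^{-(1-1)/\kappa_2}=1-e^{0}=0$. This uses only $\kappa_2>0$, which makes the exponent well defined, and holds for any $\kappa_1$.
\end{itemize}
This is exactly the boundary property already recorded after Eq.~\ref{eq:bounded-dense}.

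Next I will substitute into Eq.~\ref{eq:dense-loop}:
\[
\mathcal{L}(N,D,1)=A\,N_{\text{eff}}(1)^{\alpha}+B\,D^{\beta}+c=A\,N^{\alpha}+B\,D^{\beta}+c=\mathcal{L}(N,D).
\]
The coefficients $\{A,\alpha,B,\beta,c\}$ are shared between the two laws, so no reparameterization is needed. The recurrence-specific coefficients ($\varphi$, or $\kappa_1,\kappa_2$) drop out entirely at $R{=}1$, which means they are unidentifiable from non-looped runs alone. I will note this consistency remark in the proof.

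There is no genuine obstacle. The only point needing care is to state the result for the general class of mappings satisfying $N_{\text{eff}}(1)=N$, as the main text does, rather than only for the bounded one. I will also remark that the unrolled count $N_{\text{unroll}}(1)=N_{\text{act}}=N$ from Eq.~\ref{eq:unroll}. Hence the compute accounting $F_{\text{train}}=6ND$ coincides with the non-looped baseline as well, so the reduction holds at matched compute, not just at matched loss form.
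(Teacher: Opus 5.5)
Your proposal is correct and follows the paper's proof. The paper likewise verifies $N_{\text{eff}}(1)=N$ via $1-e^{0}=0$ and substitutes into Eq.~\ref{eq:dense-loop} to get $A\,N^{\alpha}+B\,D^{\beta}+c$ with unchanged coefficients, though it only does this for the bounded mapping, so your checks of the linear and power-law mappings and your remarks on compute accounting and unidentifiability are harmless extensions consistent with the main-text statement.
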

\begin{proof}\mbox{}\\
\textbf{\textit{Reduced form $\mathcal{L}|_{R=1}$.}} With recurrence fixed at $R{=}1$, the recurrence mapping $N_{\text{eff}}(R)$ (Eq.~\ref{eq:bounded-dense}) recovers the non-looped baseline: $N_{\text{eff}}(1)=N$. Specifically,
\[
\left.N_{\text{eff}}(R)\right|_{R=1}
=N+\kappa_1N_{\text{loop}}
\left(1-e^{-(1-1)/\kappa_2}\right)
=N.
\]
The loop scaling laws reduce to
\begin{equation}
\begin{aligned}
\left.\mathcal{L}(N,D,R)\right|_{R=1}
&=A\,N_{\text{eff}}(1)^\alpha+B\,D^\beta+c\\
&=A\,N^\alpha+B\,D^\beta+c\\
&=\mathcal{L}(N,D).
\end{aligned}
\label{eq:dense-loop-reduced-proposition}
\end{equation}
This reduced form is equivalent to the standard non-looped scaling law in Eq.~\ref{eq:chinchilla}~\citep{hoffmann2022chinchilla,kaplan2020scaling}.
\end{proof}

\subsection{MoE Loop Scaling Law}
\label{app:moe-reduced-forms}

\begin{proposition}[MoE loop scaling law equivalence]
\label{prop:moe-loop-equivalence}
The MoE loop scaling law recovers the dense loop scaling law at $E{=}1$, the non-looped MoE scaling law at $R{=}1$, and the standard dense scaling law at $E{=}1$ and $R{=}1$, with coefficient reparameterization where required.
\end{proposition}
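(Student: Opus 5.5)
The plan is to verify the three reductions one at a time by direct substitution into Eq.~\ref{eq:moe-full}, using the boundary properties in Eq.~\ref{eq:bounded-moe-boundaries-revised} and the definition of $\hat{E}$. Throughout, I would treat the transformed expert count $\hat{E}$ as the only source of extra constants. Any factor that becomes a constant once $E$ is fixed is then absorbed into a reparameterized coefficient.

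\textbf{Step 1: evaluate $\hat{E}$ and $m$ at $E{=}1$.} Substituting $E{=}1$ into the definition gives
\[
\hat{E}^{-1}=\bigl(0+(E_{\text{start}}^{-1}-E_{\text{max}}^{-1})^{-1}\bigr)^{-1}+E_{\text{max}}^{-1}=E_{\text{start}}^{-1},
\]
so $\hat{E}=E_{\text{start}}$. This is a constant, but in general it is \emph{not} equal to $1$. Separately, $E{=}1$ is the dense case, so $N_{\text{act}}=N_{\text{total}}=N$ and $m{=}1$. Hence $\kappa_j(1)=\kappa_j$, and by Eq.~\ref{eq:bounded-moe-boundaries-revised} we have $N_{\text{eff}}(R,1)=N_{\text{eff}}^{\text{bounded}}(R)$. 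Plugging both facts into Eq.~\ref{eq:moe-full} yields
\[
\mathcal{L}=A E_{\text{start}}^{\delta}\,N_{\text{eff}}^{\text{bounded}}(R)^{\alpha+\gamma\ln E_{\text{start}}}+B E_{\text{start}}^{\omega}D^{\beta+\zeta\ln E_{\text{start}}}+c.
\]
I would then set
\[
A'=AE_{\text{start}}^{\delta},\quad \alpha'=\alpha+\gamma\ln E_{\text{start}},\quad B'=BE_{\text{start}}^{\omega},\quad \beta'=\beta+\zeta\ln E_{\text{start}}.
\]
With these, the expression is exactly Eq.~\ref{eq:dense-loop} with the bounded mapping, which gives reduction (1).

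\textbf{Step 2: $R{=}1$.} By Eq.~\ref{eq:bounded-moe-boundaries-revised}, $N_{\text{eff}}(1,m)=N_{\text{act}}$ for every $m$, since the exponential factor vanishes. So Eq.~\ref{eq:moe-full} becomes
\[
A\hat{E}^{\delta}N_{\text{act}}^{\alpha+\gamma\ln\hat{E}}+B\hat{E}^{\omega}D^{\beta+\zeta\ln\hat{E}}+c.
\]
This is precisely the joint MoE scaling law of \citet{clark2022unified,ludziejewski2025joint}, with no reparameterization needed. That gives reduction (2).

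\textbf{Step 3: $E{=}1$ and $R{=}1$.} Here I would either apply Step 2 and then substitute $E{=}1$, or apply Step 1 and then invoke Proposition~\ref{prop:dense-loop-equivalence}. Either route gives $A'N^{\alpha'}+B'D^{\beta'}+c$, which is Eq.~\ref{eq:chinchilla} under the same primed coefficients. That gives reduction (3).

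The main obstacle is Step 1, and it lies in the details rather than the algebra. First, one must notice that $\hat{E}(1)=E_{\text{start}}\neq 1$, which is why the statement says ``with coefficient reparameterization.'' Second, one must justify $m{=}1$ at $E{=}1$. This requires identifying $E{=}1$ (top-1 over a single expert, or $E_{\text{route}}=k$) with $N_{\text{act}}=N_{\text{total}}$. I would state this explicitly as the dense case, so that $\kappa_j(m)$ collapses to $\kappa_j$ regardless of $\theta$.
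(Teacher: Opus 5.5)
Your proposal is correct and follows the paper's proof essentially step for step: at $E{=}1$ you set $m{=}1$ and $N_{\text{act}}{=}N$ and absorb the $\hat{E}$-dependent constants into reparameterized coefficients; at $R{=}1$ you use $N_{\text{eff}}(1,m)=N_{\text{act}}$; and you combine the two for the dense non-looped case. Your explicit evaluation $\hat{E}|_{E=1}=E_{\text{start}}$, which the paper leaves as an unevaluated $\hat{E}_1$, is a useful sharpening: it shows the reparameterization is genuinely needed, since the fitted value is $E_{\text{start}}=1.3174\neq 1$.
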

\begin{proof}\mbox{}\\
\textbf{\textit{Reduced form $\mathcal{L}|_{E=1}$.}} With expert count at $E{=}1$, we recover the dense case, for which $m{=}1$ and $N_{\text{act}}{=}N$. Since $\left.\kappa_j(m)\right|_{m=1}=\left.\kappa_jm^{-\theta}\right|_{m=1}=\kappa_j1^{-\theta}=\kappa_j$ for $j=1,2$, the recurrence mapping in Eq.~\ref{eq:bounded-moe-revised} therefore reduces to the dense looped model formula:
\[
\begin{aligned}
\left.N_{\text{eff}}(R,m)\right|_{E=1}
&=\left.\left[N_{\text{act}}+\kappa_1(m)N_{\text{loop}}
\left(1-e^{-(R-1)/\kappa_2(m)}\right)\right]\right|_{m=1,N_{\text{act}}=N}\\
&=N+\kappa_1N_{\text{loop}}
\left(1-e^{-(R-1)/\kappa_2}\right)
=N_{\text{eff}}^{\text{bounded}}(R).
\end{aligned}
\]
Let $\hat{E}_1\equiv\hat{E}|_{E=1}$. We reparameterize the coefficients as $\widetilde{A}=A\hat{E}_1^{\delta}$, $\widetilde{\alpha}=\alpha+\gamma\ln\hat{E}_1$, $\widetilde{B}=B\hat{E}_1^{\omega}$, and $\widetilde{\beta}=\beta+\zeta\ln\hat{E}_1$. The MoE loop scaling law in Eq.~\ref{eq:moe-full} then reduces to
\begin{equation}
\begin{aligned}
\left.\mathcal{L}(N_{\text{act}},D,R,E,m)\right|_{E=1}
&=A\,\hat{E}_1^{\delta}N_{\text{eff}}(R,1)^{\alpha+\gamma\ln\hat{E}_1}
+B\,\hat{E}_1^{\omega}D^{\beta+\zeta\ln\hat{E}_1}+c\\
&=\widetilde{A}\,N_{\text{eff}}^{\text{bounded}}(R)^{\widetilde{\alpha}}
+\widetilde{B}\,D^{\widetilde{\beta}}+c\\
&=\mathcal{L}(N,D,R).
\end{aligned}
\label{eq:moe-reduced-e1-proposition}
\end{equation}
This reduced form is equivalent to the loop scaling law in Eq.~\ref{eq:dense-loop} for dense looped transformers.

\par\noindent\textbf{\textit{Reduced form $\mathcal{L}|_{R=1}$.}} With recurrence fixed at $R{=}1$, the recurrence mapping in Eq.~\ref{eq:bounded-moe-revised} reduces to
\[
\begin{aligned}
\left.N_{\text{eff}}(R,m)\right|_{R=1}
&=\left.\left[N_{\text{act}}+\kappa_1(m)N_{\text{loop}}
\left(1-e^{-(R-1)/\kappa_2(m)}\right)\right]\right|_{R=1}\\
&=N_{\text{act}}+\kappa_1(m)N_{\text{loop}}
\left(1-e^0\right)=N_{\text{act}}.
\end{aligned}
\]
The MoE loop scaling law in Eq.~\ref{eq:moe-full} then reduces to
\begin{equation}
\begin{aligned}
\left.\mathcal{L}(N_{\text{act}},D,R,E,m)\right|_{R=1}
&=A\,\hat{E}^{\,\delta}N_{\text{eff}}(1,m)^{\,\alpha+\gamma\ln\hat{E}}
+B\,\hat{E}^{\,\omega}D^{\,\beta+\zeta\ln\hat{E}}+c\\
&=A\,\hat{E}^{\,\delta}N_{\text{act}}^{\,\alpha+\gamma\ln\hat{E}}
+B\,\hat{E}^{\,\omega}D^{\,\beta+\zeta\ln\hat{E}}+c\\
&=\mathcal{L}(N_{\text{act}},D,E).
\end{aligned}
\label{eq:moe-reduced-r1-proposition}
\end{equation}
This reduced form is equivalent to the MoE scaling law for non-looped MoE models~\citep{ludziejewski2025joint}.

\par\noindent\textbf{\textit{Reduced form $\mathcal{L}|_{R=1,E=1}$.}} Setting both $R{=}1$ and $E{=}1$ gives $m{=}1$, $N_{\text{act}}{=}N$, and $\left.\kappa_j(m)\right|_{m=1}=\left.\kappa_jm^{-\theta}\right|_{m=1}=\kappa_j1^{-\theta}=\kappa_j$ for $j=1,2$. The recurrence mapping in Eq.~\ref{eq:bounded-moe-revised} thus reduces to the dense non-looped baseline:
\[
\begin{aligned}
\left.N_{\text{eff}}(R,m)\right|_{R=1,E=1}
&=\left.\left[N_{\text{act}}+\kappa_1(m)N_{\text{loop}}
\left(1-e^{-(R-1)/\kappa_2(m)}\right)\right]\right|_{R=1,m=1,N_{\text{act}}=N}\\
&=N+\kappa_1N_{\text{loop}}\left(1-e^0\right)=N.
\end{aligned}
\]
Using the same reparameterized coefficients $\widetilde{A}=A\hat{E}_1^{\delta}$, $\widetilde{\alpha}=\alpha+\gamma\ln\hat{E}_1$, $\widetilde{B}=B\hat{E}_1^{\omega}$, and $\widetilde{\beta}=\beta+\zeta\ln\hat{E}_1$ as in Eq.~\ref{eq:moe-reduced-e1-proposition}, the MoE loop scaling law in Eq.~\ref{eq:moe-full} then reduces to
\begin{equation}
\begin{aligned}
\left.\mathcal{L}(N_{\text{act}},D,R,E,m)\right|_{R=1,E=1}
&=A\,\hat{E}_1^{\delta}N_{\text{eff}}(1,1)^{\alpha+\gamma\ln\hat{E}_1}
+B\,\hat{E}_1^{\omega}D^{\beta+\zeta\ln\hat{E}_1}+c\\
&=\widetilde{A}\,N^{\widetilde{\alpha}}
+\widetilde{B}\,D^{\widetilde{\beta}}+c\\
&=\mathcal{L}(N,D).
\end{aligned}
\label{eq:moe-reduced-dense-proposition}
\end{equation}
This reduced form is equivalent to the standard non-looped scaling law in Eq.~\ref{eq:chinchilla}~\citep{hoffmann2022chinchilla,kaplan2020scaling}.
\end{proof}

\section{Expert-Path Diversity Across Recurrence}
\label{app:expert-path-diversity}

\paragraph{Expert-path diversity.}
The sparsity-conditional mapping is motivated by MoE sparse routing: a token may access different experts across recurrent passes. We measure this directly from the router decisions of trained looped MoE models. For an input token $t$, looped MoE layer $\ell$, and pass $r$, let $\mathcal{E}_{t,\ell,r}$ be the top-$k$ experts selected from $E_{\text{route}}$ routed experts. We define the expert-path diversity as
\begin{equation}
\Psi(R)
=\frac{1}{k}\,
\mathbb{E}_{t,\ell}
\left[
\left|\bigcup_{r=1}^{R}\mathcal{E}_{t,\ell,r}\right|
\right],
\label{eq:expert-path-diversity}
\end{equation}
where the union deduplicates experts reused across recurrent passes, the expectation averages over tokens and the $n_{\ell,\text{loop}}$ looped layers, and normalizing by $k$ expresses the expert-path diversity per active expert. At $R{=}1$ (non-looped) or $E_{\text{route}}{=}k{=}1$ (dense), $\Psi{=}1$ by definition, as a single pass or a single expert does not have path diversity over recurrent passes.

\begin{table}[!h]
\centering
\caption{Expert-path diversity $\Psi$ across varying recurrence $R$ and expert count $E$ (mean $\pm$ std).}
\label{tab:expert-path-diversity}
\scriptsize
\setlength{\tabcolsep}{5pt}
\begin{tabular}{c|cccc}
\toprule
$E$ & $R{=}1$ & $R{=}2$ & $R{=}4$ & $R{=}8$ \\
\midrule
1 & $1.0000\pm0.0000$ & $1.0000\pm0.0000$ & $1.0000\pm0.0000$ & $1.0000\pm0.0000$ \\
2 & $1.0000\pm0.0000$ & $1.1212\pm0.0692$ & $1.2165\pm0.0465$ & $1.5088\pm0.0529$ \\
4 & $1.0000\pm0.0000$ & $1.1917\pm0.0974$ & $1.2973\pm0.0703$ & $1.5300\pm0.0785$ \\
8 & $1.0000\pm0.0000$ & $1.2099\pm0.0868$ & $1.4159\pm0.0901$ & $1.7035\pm0.0861$ \\
\bottomrule
\end{tabular}
\end{table}

\paragraph{Empirical measures of $\Psi$.}
We report $\Psi$ for looped models across varying recurrence and expert count, where $\Psi$ is computed on the same validation set of 800 prompts from different tasks, e.g., math, code, and knowledge.
Table~\ref{tab:expert-path-diversity} shows that at fixed recurrence, expert-path diversity $\Psi$ increases with larger expert count consistently. This indicates that greater sparsity lets each recurrent pass reach a broader set of expert parameters. As $\Psi$ is only an empirical routing statistic, we introduce sparsity as a condition for modeling the modulated effective-parameter gain (Eq.~\ref{eq:bounded-moe-revised}).

\section{Scaling Experiments and Parametric Fitting}
\label{app:scaling-exp}

\subsection{Training Details}
\label{app:pretraining-setup}

\paragraph{Model training setups.}
We follow the experimental setup similar to recent work on MoE scaling laws~\citep{chen2026mobilemoe}, using the same pre-training data. All models are trained from scratch on the same open-licensed, web-heavy data mixture, supplemented with math, code, knowledge, and science data. We train with a context length of 2,048 and sequence packing, tied input-output embeddings, and RoPE with base frequency $500{,}000$. We use the llama tokenizer with a vocabulary of 202k. We use AdamW with $\beta_1{=}0.9$, $\beta_2{=}0.95$, $\epsilon{=}10^{-15}$, weight decay $0.1$, and gradient clipping at $1.0$. Model weights are trained in BF16, while optimizer states, gradients, and MoE router computations are maintained in FP32. All scaling-sweep experiments run on 8 nodes (64 NVIDIA H100 96\,GB GPUs) with a global batch size of 3,072 and a sequence length of 2,048, corresponding to approximately $6.29$ million tokens per batch.

\paragraph{Looped MoE setups.}
For MoE models, we adopt sigmoid gating with per-token top-$k$ normalization, auxiliary-loss-free load balancing with bias-update rate $\lambda_{\text{lb}}{=}10^{-3}$, router z-loss with coefficient $\lambda_z{=}10^{-4}$, and drop-and-pad token dispatch with capacity factor $1.5$.
For the scaling sweep experiments used for loop scaling laws in Sections~\ref{sec:loop_scaling_law}--\ref{sec:scaling-experiments}, we apply the next-token prediction loss to the output of the final recurrent pass. For the looped MoE in Table~\ref{tab:operating-recurrence}, we add per-loop supervision~\citep{bae2024relaxed} to allow the same trained checkpoint to be evaluated at different recurrence at test time.

\subsection{Model Ladder and Scaling Sweep}
\label{app:scaling-ladder}

\paragraph{Model ladder setups.}
We construct a model ladder with five active model scales $N_{\text{act}}\in\{0.3{,}0.6{,}1.0{,}1.6{,}2.4\}\text{B}$. Each rung fixes its own model configuration $d_m$, $n_h$, $n_{\ell}$, $n_{\ell,\text{loop}}$, and $N_{\text{loop}}$. Here, $d_m$ is the model dimension; $n_h$ and $n_{\text{KV}}$ are the numbers of attention and KV heads; $n_{\ell}$ and $n_{\ell,\text{loop}}$ are the numbers of total and recurrent layers; $d_h$ is the head dimension; $N_{\text{act}}, N_{\text{total}}$ denote the active and total parameter counts and $N_{\text{loop}}$ is the parameter count of the recurrent block. All configurations use $n_{\text{KV}}{=}4$ and $d_h{=}64$, with $n_h=d_m/d_h$. Since we adopt the standard middle-block looping~\citep{geiping2025scaling}, the first two and last two layers remain unshared, giving $n_{\ell,\text{loop}}=n_{\ell}-4$.
Table~\ref{tab:scaling-ladder} summarizes the model ladder setups.

\begin{table}[!h]
\centering
\caption{Model scaling ladder over five model scales. Parameter counts are in billions (B). $N_{\text{act}}$ and $N_{\text{total}}$ include embedding parameters, whereas $N_{\text{loop}}$ includes only parameters in the recurrent block.}
\label{tab:scaling-ladder}
\fontsize{7.5}{9}\selectfont
\setlength{\tabcolsep}{5pt}
\begin{tabular}{c|cccc|c|ccccc}
\toprule
& \multicolumn{4}{c|}{Architecture dimensions} & & \multicolumn{5}{c}{$N_{\text{total}}(E)$ (B)} \\
\cmidrule(lr){2-5}
\cmidrule(lr){7-11}
$N_{\text{act}}$ (B) & $d_m$ & $n_h$ & $n_{\ell}$ & $n_{\ell,\text{loop}}$ & $N_{\text{loop}}$ (B) & $E{=}1$ & $E{=}2$ & $E{=}4$ & $E{=}8$ & $E{=}16$ \\
\midrule
0.3 & 768  & 12 & 20 & 16 & 0.1 & 0.3 & 0.5 & 0.8 & 1.3 & 2.5 \\
0.6 & 1024 & 16 & 26 & 22 & 0.3 & 0.6 & 0.9 & 1.6 & 2.9 & 5.5 \\
1.0 & 1280 & 20 & 32 & 28 & 0.7 & 1.0 & 1.6 & 2.9 & 5.4 & 10.5 \\
1.6 & 1536 & 24 & 38 & 34 & 1.2 & 1.6 & 2.7 & 4.8 & 9.1 & 17.7 \\
2.4 & 1792 & 28 & 44 & 40 & 1.8 & 2.4 & 4.1 & 7.5 & 14.3 & 27.8 \\
\bottomrule
\end{tabular}
\end{table}

\paragraph{Scaling sweep runs.}
Our experimental scaling sweep uses $N_{\text{act}}\in\{0.3,0.6,1.0\}\text{B}$ across training tokens $D\in\{100,200,300,400,500\}\text{B}$, expert counts $E\in\{1,2,4,8,16\}$ which vary sparsity and active-parameter ratios $m$, and recurrences $R\in\{1,2,3,4,6,8\}$. Following \citet{kaplan2020scaling}, the fitted parameter counts exclude embeddings.

\paragraph{Model optimization design space.}
For the resource-constrained joint optimization of recurrence and sparsity in Figure~\ref{fig:memory-optimal-e}(c) (Section~\ref{sec:optimal-recurrence-sparsity}), we evaluate the fitted law over five model scales on the model ladder: $N_{\text{act}}\in\{0.3{,}0.6{,}1.0{,}1.6{,}2.4\}\text{B}$, expert counts $E\in\{1,2,4,8,16,32\}$, and recurrences $R\in\{1,\ldots,10\}$. This gives 300 candidate configurations, where $N_{\text{act}}\in\{1.6,2.4\}\text{B}$, expert counts $E{>}16$, and recurrences $R{>}8$ lie in the extrapolation range of the fitted law.

\subsection{Parametric Fitting}
\label{app:staged-fitting}

\paragraph{Fitting the dense loop scaling laws.}
For fitting the dense loop scaling law, we instantiate the effective parameter count $N_{\text{eff}}(R)$ in Eq.~\ref{eq:dense-loop} using one of the following recurrence mappings:
\begin{equation}
N_{\text{eff}}(R)=
\begin{cases}
N+(R-1)N_{\text{loop}}, & \text{linear},\\
N+(R^\varphi-1)N_{\text{loop}}, & \text{power law},\\
N+\kappa_1N_{\text{loop}}
\left(1-e^{-(R-1)/\kappa_2}\right), & \text{bounded}.
\end{cases}
\label{eq:dense-mappings-appendix}
\end{equation}
Here, $N$ is the stored model parameter count and $N_{\text{loop}}$ is the parameter count of the recurrent block, as defined in Section~\ref{sec:prelim}. The linear mapping coincides with the unrolled parameter count $N_{\text{unroll}}(R)$, whereas the power-law and bounded mappings model the effective-parameter gain from looping. The linear mapping introduces no recurrence-specific fitted coefficient, while the power-law and bounded mappings introduce $\varphi$ and $\{\kappa_1,\kappa_2\}$, respectively.

At $R{=}1$, the dense loop scaling law recovers the standard non-looped scaling law (Proposition~\ref{prop:dense-loop-equivalence}, Eq.~\ref{eq:dense-loop-reduced-proposition}). We therefore derive the loop scaling law using a staged fitting procedure: first establish the non-looped scaling law with base coefficients $\{A,\alpha,B,\beta,c\}$ using the $R{=}1$ subset of the scaling sweep across model size $N$ and data $D$, and then extend it to the loop scaling law to jointly estimate all coefficients using the scaling sweep across $R$, $N$, and $D$. For a fair comparison in Figure~\ref{fig:dense-loop}, we apply the same fitting procedure to the loop scaling law under the linear and power-law mappings (Eq.~\ref{eq:recurrence-mappings-revised}) and the bounded mapping (Eq.~\ref{eq:bounded-dense}), and evaluate each formulation on the same held-out $R$, $N$, and $D$, where the held-out slices are $R{=}16$, $N{=}1.0\text{B}$, and $D{\in}(400,500]\text{B}$.
Figure~\ref{fig:dense-loop}(c) reports the RMSE on each held-out slice after fitting on the remaining runs.

\paragraph{Fitting the MoE loop scaling laws.}
For fitting the MoE loop scaling law, we instantiate the effective parameter count $N_{\text{eff}}(R,m)$ in Eq.~\ref{eq:moe-full} using one of the following recurrence mappings:
\begin{equation}
N_{\text{eff}}(R,m)=
\begin{cases}
N_{\text{act}}+(R-1)N_{\text{loop}}, & \text{linear},\\
N_{\text{act}}+(R^\varphi-1)N_{\text{loop}}, & \text{power law},\\
N_{\text{act}}+\kappa_1N_{\text{loop}}
\left(1-e^{-(R-1)/\kappa_2}\right), & \text{bounded},\\
N_{\text{act}}+\kappa_1(m)N_{\text{loop}}
\left(1-e^{-(R-1)/\kappa_2(m)}\right), & \text{sparsity-conditional}.
\end{cases}
\label{eq:moe-mappings-appendix}
\end{equation}
For the sparsity-conditional mapping, $\kappa_j(m)=\kappa_jm^{-\theta}$ for $j\in\{1,2\}$. Here, $N_{\text{act}}$ is the active parameter count, $N_{\text{loop}}$ is the active parameter count of the recurrent block, and $m=N_{\text{act}}/N_{\text{total}}$ is computed from the active and total parameter counts as defined in Section~\ref{sec:prelim}. The linear mapping coincides with $N_{\text{unroll}}(R)$, while the linear, power-law, and bounded recurrence mappings are independent of $m$; their outer MoE scaling laws retain the dependence on expert count $E$. The linear mapping introduces no recurrence-specific fitted coefficient, while the power-law, bounded, and sparsity-conditional mappings introduce $\varphi$, $\{\kappa_1,\kappa_2\}$, and $\{\kappa_1,\kappa_2,\theta\}$, respectively.

At $R{=}1$, the MoE loop scaling law recovers the standard non-looped MoE scaling law (Proposition~\ref{prop:moe-loop-equivalence}, Eq.~\ref{eq:moe-reduced-r1-proposition}). We derive the MoE loop scaling law using a staged fitting procedure: first establish the non-looped MoE scaling law with base coefficients $\{A,\alpha,B,\beta,c,\delta,\gamma,\omega,\zeta,E_{\text{start}},E_{\text{max}}\}$ using the $R{=}1$ subset of the scaling sweep across active model size $N_{\text{act}}$, training tokens $D$, and expert count $E$; and then we extend it to the MoE loop scaling law to jointly estimate all coefficients using the full sweep across the four scaling axes: $R$, $N_{\text{act}}$, $D$, and $E$. For a fair comparison in Figure~\ref{fig:moe-recurrence-mapping}, we apply the same fitting procedure to the MoE loop scaling law under the linear, sparsity-independent power-law, bounded, and sparsity-conditional recurrence mappings, and evaluate each formulation on the same held-out $R$, $E$, $N_{\text{act}}$, and $D$, where the held-out slices are $R{=}16$, $E{=}16$, $N_{\text{act}}{=}1.0\text{B}$, and $D{\in}(400,500]\text{B}$. In Figure~\ref{fig:moe-recurrence-mapping}(c), we report the RMSE scores on the held-out runs of recurrence $R$, expert count $E$, model $N_{\text{act}}$, and data $D$, after fitting each law on the remaining runs.

\paragraph{Numerical optimization and final fitted coefficients.}
We fit the final MoE loop scaling law using more than 1,000 loss observations from the scaling sweep across four axes $N_{\text{act}},D,E,R$ (Appendix~\ref{app:scaling-ladder}).
We initialize the scaling-law fit using \texttt{scipy.optimize.curve\_fit} with nonlinear least squares and an MSE objective, and then refine the fit using \texttt{scipy.optimize.minimize} with L-BFGS-B optimization and a log-Huber objective. Table~\ref{tab:moe-fit-appendix} reports the final fitted coefficients of the MoE Loop Scaling Law in Eq.~\ref{eq:moe-full}, using all the scaling sweep runs over recurrence $R$, expert count $E$, model $N_{\text{act}}$, and data $D$.

\begin{table}[!h]
\centering
\caption{Fitted coefficients of MoE Loop Scaling Law in Eq.~\ref{eq:moe-full}. RMSE: root-mean-square error of the fit.}
\label{tab:moe-fit-appendix}
\scriptsize
\setlength{\tabcolsep}{3pt}
\resizebox{\linewidth}{!}{%
\begin{tabular}{@{}rrrrr|rrrrrr|rrr|r@{}}
\toprule
\multicolumn{5}{c|}{\textbf{Base scaling law coefficients}} &
\multicolumn{6}{c|}{\textbf{MoE coefficients}} &
\multicolumn{3}{c|}{\textbf{Recurrence coefficients}} &
\multicolumn{1}{c}{\textbf{Fit}} \\
\cmidrule(lr){1-5}\cmidrule(lr){6-11}\cmidrule(lr){12-14}\cmidrule(l){15-15}
$A$ & $\alpha$ & $B$ & $\beta$ & $c$ &
$\delta$ & $\gamma$ & $\omega$ & $\zeta$ & $E_{\text{start}}$ & $E_{\max}$ &
$\kappa_1$ & $\kappa_2$ & $\theta$ & \textbf{RMSE} \\
\midrule
0.8085 & $-0.1951$ & 3.4291 & $-0.7548$ & 1.3555 &
$-0.1935$ & $-0.0137$ & $-0.2056$ & 0.0881 & 1.3174 & 57.1201 &
0.3682 & 1.4037 & 0.3286 &
0.0037 \\
\bottomrule
\end{tabular}}
\end{table}

\paragraph{Sparsity exponent ablation.}
We ablate a less constrained variant of Eq.~\ref{eq:bounded-moe-revised} with separate sparsity exponents, $\kappa_1(m)=\kappa_1m^{-\theta_1}$ and $\kappa_2(m)=\kappa_2m^{-\theta_2}$. The RMSE remains $0.0037$ on the final fitted law.
As adding extra exponent yields no meaningful gain, we adopt the shared-$\theta$ formulation.

\subsection{Bootstrap Results of the MoE Loop Scaling Law}
\label{app:bootstrap-uncertainty}

To quantify uncertainty in our fitted MoE loop scaling law in Table \ref{tab:moe-fit-appendix}, we use a bootstrapping protocol similar to prior works~\citep{hoffmann2022chinchilla,ludziejewski2025joint}. We repeatedly sample 90\% of the observations without replacement 100 times, and refit the law on each sampled subset. For the fitted coefficients, we report the full-data estimates and 90\% bootstrap percentile confidence intervals (CIs), defined by the 5th and 95th percentiles over the 100 refit runs.

\begin{table}[!h]
\centering
\caption{Bootstrap results on the MoE loop scaling law. (a) Coefficients fitted on full-data or the 90\% sampled subsets with intervals spanning the 5th--95th percentiles over 100 refit runs. (b) Derived joint optimum under the compute and memory budgets (with INT4 weights) used in Figure~\ref{fig:memory-optimal-e}(c) on the 100 refit runs.}
\label{tab:moe-bootstrap-results}
\scriptsize
\begin{minipage}[t]{0.36\linewidth}
\centering
\textbf{(a) Coefficient uncertainty}\par\smallskip
\setlength{\tabcolsep}{4pt}
\begin{tabular}{@{}lcc@{}}
\toprule
\textbf{Coefficient} & \textbf{Estimate} & \textbf{90\% CI} \\
\midrule
$\kappa_1$ & 0.3682 & $[0.3658,\,0.3723]$ \\
$\kappa_2$ & 1.4037 & $[1.3896,\,1.4243]$ \\
$\theta$   & 0.3286 & $[0.3183,\,0.3350]$ \\
\bottomrule
\end{tabular}
\end{minipage}\hfill
\begin{minipage}[t]{0.61\linewidth}
\centering
\textbf{(b) Joint-optimum uncertainty}\par\smallskip
\setlength{\tabcolsep}{4pt}
\begin{tabular}{@{}ccccc@{}}
\toprule
\textbf{Compute budget} & \textbf{Memory budget} & \textbf{$N_{\text{act}}^{\star}$ 90\% CI} & \textbf{$E^{\star}$ 90\% CI} & \textbf{$R^{\star}$ 90\% CI} \\
\midrule
$5{\times}10^{21}$ & 1 GB  & $[1.0,1.0]\text{B}$ & $[2,2]$   & $[2,2]$ \\
$5{\times}10^{21}$ & 3 GB  & $[1.0,1.0]\text{B}$ & $[8,8]$   & $[2,2]$ \\
$5{\times}10^{21}$ & 10 GB & $[1.6,1.6]\text{B}$ & $[16,16]$ & $[1,1]$ \\
\bottomrule
\end{tabular}
\end{minipage}
\end{table}

Table~\ref{tab:moe-bootstrap-results}(a) presents the bootstrap results for coefficients of the sparsity-conditional recurrence mapping. The narrow intervals of 100 refits indicate these coefficient estimates remain stable under bootstrapping.
Table~\ref{tab:moe-bootstrap-results}(b) shows the 90\% bootstrap CIs of the joint optimum for resource-constrained architecture selection in Figure~\ref{fig:memory-optimal-e}(c) (Section~\ref{sec:optimal-recurrence-sparsity}). The collapsed intervals indicate the selected configurations remain the same across 100 refits, and confirm that the fitted law yields robust estimates for architecture optimization under compute and memory budgets.

\section{Downstream Scaling and Evaluation}
\label{app:downstream-evaluation}

\subsection{Downstream Evaluation Protocols}
\label{app:evaluation}

\begin{table}[!h]
\centering
\caption{Evaluation setups of 14 foundational benchmarks. Task names and metrics follow \textsc{lm-eval}.}
\label{tab:eval-config}
\scriptsize
\setlength{\tabcolsep}{2.5pt}
\begin{tabular}{@{}>{\raggedright\arraybackslash}p{0.22\textwidth}>{\raggedright\arraybackslash}p{0.13\textwidth}>{\raggedright\arraybackslash}p{0.188\textwidth}c>{\raggedright\arraybackslash}p{0.292\textwidth}@{}}
\toprule
\textbf{Category} & \textbf{Benchmark} & \textbf{Task name} & \textbf{$n$-shot} & \textbf{Metric} \\
\midrule
\multirow{2}{=}{Reasoning}
 & BBH & \texttt{leaderboard\_bbh} & 3 & \texttt{acc\_norm} \\
 & GSM8K & \texttt{gsm8k\_cot} & 8 & \mbox{\texttt{exact\_match,flexible-extract}} \\
\midrule
\multirow{3}{=}{Science}
 & ARC-C & \texttt{arc\_challenge} & 25 & \texttt{acc\_norm} \\
 & ARC-E & \texttt{arc\_easy} & 0 & \texttt{acc\_norm} \\
 & OBQA & \texttt{openbookqa} & 0 & \texttt{acc\_norm} \\
\midrule
\multirow{4}{=}{\mbox{Commonsense Reasoning}}
 & HellaSwag & \texttt{hellaswag} & 0 & \texttt{acc\_norm} \\
 & PIQA & \texttt{piqa} & 0 & \texttt{acc\_norm} \\
 & SIQA & \texttt{social\_iqa} & 0 & \texttt{acc} \\
 & WinoGrande & \texttt{winogrande} & 0 & \texttt{acc} \\
\midrule
\multirow{2}{=}{Reading}
 & BoolQ & \texttt{boolq} & 0 & \texttt{acc} \\
 & DROP & \texttt{drop} & 3 & \texttt{f1} \\
\midrule
\multirow{3}{=}{Knowledge}
 & MMLU & \texttt{mmlu} & 5 & \texttt{acc} \\
 & NQ & \texttt{nq\_open} & 5 & \texttt{exact\_match} \\
 & TQA & \texttt{triviaqa} & 5 & \texttt{exact\_match} \\
\bottomrule
\end{tabular}
\end{table}

\paragraph{Evaluation protocols.}
We evaluate the models in Section~\ref{sec:downstream-scaling} on 14 widely used benchmarks spanning five core competencies: (1) reasoning (BBH~\citep{bbh}, GSM8K~\citep{gsm8k}), (2) science (ARC-C/E~\citep{arc}, OpenBookQA~\citep{openbookqa}), (3) commonsense (HellaSwag~\citep{hellaswag}, PIQA~\citep{piqa}, SIQA~\citep{siqa}, WinoGrande~\citep{winogrande}), (4) reading (BoolQ~\citep{boolq}, DROP~\citep{drop}), and (5) knowledge (MMLU~\citep{mmlu}, Natural Questions~\citep{naturalquestions}, TriviaQA~\citep{triviaqa}). We use the Language Model Evaluation Harness (\href{https://github.com/EleutherAI/lm-evaluation-harness}{\textsc{lm-eval}})~\citep{eval-harness} with the \href{https://github.com/vllm-project/vllm}{vLLM} backend, bfloat16 model precision, and greedy decoding for generation-based tasks. Detailed benchmark-specific few-shot settings and metrics are given in Table~\ref{tab:eval-config}.
For compactness in Table~\ref{tab:operating-recurrence}, we use the following benchmark abbreviations: BBH: BIG-Bench Hard, ARC-C/E: ARC-Challenge/Easy, OBQA: OpenBookQA, HS: HellaSwag, Wino: WinoGrande, NQ: Natural Questions, TQA: TriviaQA.

\subsection{Compute-Matched Trillion-Token Experiment}
\label{app:recurrence-selection}

\paragraph{Scaling-law-guided model design.}
In Section~\ref{sec:downstream-scaling}, we compare a looped MoE model with a non-looped MoE reference under matched training compute. Given fixed compute FLOPs, we derive the compute-optimal recurrence for the looped MoE model by evaluating each candidate recurrence on an IsoFLOP basis, as summarized in Algorithm~\ref{alg:isoflop-recurrence}.

\begin{algorithmbox}[label={alg:isoflop-recurrence},fontupper=\footnotesize,fonttitle=\small\bfseries,left=4pt,right=4pt,top=2pt,bottom=2pt,before skip=3pt,after skip=3pt,before upper={\raggedright\setlength{\abovedisplayskip}{2pt}\setlength{\belowdisplayskip}{2pt}\setlength{\abovedisplayshortskip}{1pt}\setlength{\belowdisplayshortskip}{1pt}}]{IsoFLOP recurrence selection}
\textbf{Input:} compute budget $\bar F_{\text{train}}$ or reference model $(N_{\text{ref}},D_{\text{ref}})$; target looped MoE architecture $(N_{\text{act}},N_{\text{loop}},E,m)$; recurrence set $\mathcal{R}$; tolerance $\epsilon$.
\begin{enumerate}[leftmargin=*,itemsep=1pt,topsep=1pt,parsep=0pt,partopsep=0pt]
    \item If a reference model is provided, set
    $\bar F_{\text{train}}=6N_{\text{ref}}D_{\text{ref}}$.
    \item For each $R\in\mathcal{R}$ of the target looped MoE, compute its unrolled parameter count and compute-matched training-token budget:
    \[
    N_{\text{unroll}}(R)=N_{\text{act}}+(R-1)N_{\text{loop}},
    \qquad
    D_{\text{target}}^{R}=\bar F_{\text{train}}/[6N_{\text{unroll}}(R)].
    \]
    \item Compute the looping effective parameter count and the predicted loss using Eqs.~\ref{eq:bounded-moe-revised} and~\ref{eq:moe-full}:
    \[
    \begin{aligned}
    N_{\text{eff}}(R,m)
    &=N_{\text{act}}+\kappa_1(m)N_{\text{loop}}
    \left(1-e^{-(R-1)/\kappa_2(m)}\right),\\
    \mathcal{L}_R
    &=\mathcal{L}(N_{\text{act}},D_{\text{target}}^{R},R,E,m).
    \end{aligned}
    \]
    \item Select the compute-optimal recurrence according to
    Eq.~\ref{eq:compute-optimal-r}:
    \[
    \begin{aligned}
    R^{\star}&=\arg\min_R\mathcal{L}(N_{\text{act}},D_{\text{target}}^{R},R,E,m),\\
    \text{s.t.}\quad&6N_{\text{unroll}}(R)D_{\text{target}}^{R}=\bar F_{\text{train}},
    \quad \Delta\mathcal{L}(R)\geq\epsilon.
    \end{aligned}
    \]
\end{enumerate}

\textbf{Output:} Compute-optimal recurrence $R^{\star}$.
\end{algorithmbox}
We set $\epsilon$ to the fitted-law RMSE, which measures the predictive discrepancy in the model loss. A marginal improvement below this resolution is treated as indistinguishable from fitting error and thus insufficient to justify the cost of an additional recurrent pass.

\paragraph{Matched-compute training and test-time evaluation.}
For Table~\ref{tab:operating-recurrence}, the procedure above selects $R^{\star}{=}5$ for the A0.3B-1.3B LoopMoE ($E{=}8$), which we compare with an A0.6B-2.9B non-looped MoE ($E{=}8,R{=}1$). The models are trained for ${\sim}3$T and ${\sim}6$T tokens, respectively, which correspond to approximately $1.5{\times}10^{22}$ FLOPs.
We calculate the compute FLOPs as $F_{\text{train}}=6N_{\text{unroll}}(R)D$, where $D$ is the training tokens and $N_{\text{unroll}}(R)$ is the unrolled parameter count at recurrence $R$. A0.3B-1.3B LoopMoE is trained as described in Appendix~\ref{app:pretraining-setup}.
As LoopMoE gives valid output per recurrent pass, we evaluate the model by varying recurrence $R\in\{1,2,\ldots,5\}$ at test time, to switch inference effort from low to high on demand.
To compare the inference compute of A0.3B-1.3B LoopMoE and A0.6B-2.9B non-looped MoE, we report their relative inference compute ratio as $F_{\text{inf}}^{\text{0.3B}}(R)/F_{\text{inf}}^{\text{0.6B}}(1)$, where $F_{\text{inf}}(R)=2N_{\text{unroll}}(R)$.

\end{document}

%% file: math_commands.tex
\usepackage{amsmath,amsfonts,bm}

\def\eqref#1{equation~\ref{#1}}

\def\1{\bm{1}}

\DeclareMathAlphabet{\mathsfit}{\encodingdefault}{\sfdefault}{m}{sl}
\SetMathAlphabet{\mathsfit}{bold}{\encodingdefault}{\sfdefault}{bx}{n}

